\documentclass{article}



     \usepackage[preprint]{neurips_2025}



\usepackage[utf8]{inputenc} 
\usepackage[T1]{fontenc}    
\usepackage{hyperref}       
\usepackage{url}            
\usepackage{booktabs}       
\usepackage{amsfonts}       
\usepackage{nicefrac}       
\usepackage{microtype}      
\usepackage[dvipsnames]{xcolor}         

\usepackage{caption}
\usepackage{subcaption}

\usepackage{algorithm}
\usepackage{algorithmic}

\usepackage{amsmath}
\usepackage{amssymb}
\usepackage{mathtools}
\usepackage{amsthm}

\usepackage[capitalize,noabbrev]{cleveref}

\usepackage{dsfont}

\usepackage{tikz}
\usetikzlibrary{shapes.geometric}
\usetikzlibrary{positioning}
\usepackage{tikz,forest}

\usepackage{pgfplots}  
\usepgfplotslibrary{fillbetween}
\usepackage{pgfplotstable}
\pgfplotsset{compat=1.18}

\pgfplotsset{every axis/.append style={
    font=\normalsize, 
    legend cell align={left},
    grid=both,
    width=10cm,
    height=7cm,
}}

\tikzset{
  eln/.style={font = \scriptsize,circle,inner sep=3pt}
}

\forestset{
    root/.style={
        draw, fill=gray!30, rounded corners, align=center, 
        inner sep=4pt, 
    },
    treatnode/.style={
        draw, fill=green!30, ellipse, align=center, 
        inner sep=2pt, 
    },
    notreatnode/.style={
        draw, fill=red!30, ellipse, align=center, 
        inner sep=2pt, 
    },
    default tree/.style={
        for tree={
        level distance=5cm, 
        sibling distance=2cm 
        }
    },
    default preamble={
        for tree={
          scale=0.8,
        }
    },
    EL/.style = {before typesetting nodes={where n=1
        {edge label/.wrap value={node[pos=0.5,anchor=east, eln]{#1}}}
        {edge label/.wrap value={node[pos=0.5,anchor=west, eln]{#1}}}
        }}
}

\theoremstyle{plain}
\newtheorem{theorem}{Theorem}[section]

\newtheorem{lemma}[theorem]{Lemma}

\theoremstyle{definition}

\theoremstyle{remark}
\newtheorem{remark}[theorem]{Remark}


\title{Learning Treatment Allocations with Risk Control Under Partial Identifiability}

%

\author{%
  Sofia Ek \\
  Uppsala University\\
  \texttt{sofia.ek@it.uu.se} \\
  \And
  Dave Zachariah \\
  Uppsala University\\
  \texttt{dave.zachariah@it.uu.se} \\
}


\usepackage{amsmath,amsfonts,bm}


\newcommand{\Dec}{A}

\newcommand{\X}{X} 

\newcommand{\Loss}{L}

\newcommand{\Sampling}{S}

\newcommand{\Unobserved}{U}
\newcommand{\smallprob}{\alpha}

\newcommand{\Weight}{W}

\newcommand{\policy}{\pi}
\newcommand{\policyclass}{\Pi}
\newcommand{\odds}{\text{odds}}
\newcommand{\Weightupperbound}{\overline{\Weight}^\Gamma}

\newcommand{\Prob}{\mathbb{P}}
\newcommand{\prob}{p}
\newcommand{\probhat}{\widehat{p}}
\newcommand{\probpolicy}{p_{\policy}}
\newcommand{\Probpolicy}{\mathbb{P}_{\policy}}
\newcommand{\oddshat}{\widehat{\odds}}

\newcommand{\GammaSel}{\Gamma_{s}}
\newcommand{\GammaPol}{\Gamma_{a}}

\newcommand{\setdata}{\mathcal{D}}
\newcommand{\settrial}{\mathcal{D}}

\newcommand{\setpolicy}{\mathcal{D}_m}
\newcommand{\npolicy}{m}
\newcommand{\setB}{\mathcal{D}_l}
\newcommand{\nB}{l}
\newcommand{\setrisk}{\mathcal{D}_n}
\newcommand{\nrisk}{n}

\newcommand{\ind}[1]{\mathds{1}(#1)}

\newcommand{\rct}{\textsc{rct}}

\newcommand{\Popr}{R}
\newcommand{\Tr}{T}
\newcommand{\Trworstcase}{\overline{T}}
\newcommand{\trt}{\tau}
\newcommand{\trtparam}{t}
\newcommand{\Ehat}{\widehat{\E}}
\newcommand{\miscoveragerate}{\alpha}
\newcommand{\Trconf}{\overline{\Tr}^\miscoveragerate_{\nrisk}}










\def\eqref#1{equation~\ref{#1}}









\def\1{\bm{1}}










\DeclareMathAlphabet{\mathsfit}{\encodingdefault}{\sfdefault}{m}{sl}
\SetMathAlphabet{\mathsfit}{bold}{\encodingdefault}{\sfdefault}{bx}{n}











\newcommand{\E}{\mathbb{E}}



\DeclareMathOperator*{\argmin}{arg\,min}

\begin{document}

\maketitle

\begin{abstract}
Learning beneficial treatment allocations for a patient population is an important problem in precision medicine. Many treatments come with adverse side effects that are not commensurable with their potential benefits. Patients who do not receive benefits after such treatments are thereby subjected to unnecessary harm. This is a `treatment risk' that we aim to control when learning beneficial allocations. The constrained learning problem is challenged by the fact that the treatment risk is not in general identifiable using either randomized trial or observational data. We propose a certifiable learning method that controls the treatment risk with finite samples in the partially identified setting. The method is illustrated using both simulated and real data.
\end{abstract}

\section{Introduction}

The allocation of treatments in a patient population is a fundamental challenge in precision medicine. Consider a policy $\policy$ that recommends one of two treatment options $\Dec = \{0,1 \}$ based on a patient's observable characteristics $\X$. The recommendation may be to provide standard care $(\Dec=0)$ or administer aggressive treatment $(\Dec=1)$ among cancer patients. Here we consider dichotomous health outcomes, such as recovery or nonrecovery within a given period. Let the binary loss $\Loss \in \{0,1 \}$ indicate a non-beneficial outcome and let the proportion of health losses under policy $\policy$, that is, $\Probpolicy(\Loss=1)$, define its \emph{population risk}.

The population risk is minimized by any policy $\policy(\X)$ that assigns $\Dec=1$ to patient covariates $\X$ for which the probability of health loss is lower than under $\Dec=0$,  even if it is marginally so. For a simple illustration, consider covariate $\X \in [30,80]$ to be the age of a patient in years. Suppose the probability of health loss among patients of age $\X$ is
\begin{equation*}
\prob( \Loss = 1 | \X, \Dec = 0) = 0.80 \quad \text{and} \quad  \prob( \Loss = 1 | \X, \Dec = 1) = 0.01 \cdot (\X - 30),
\end{equation*}
for non-treated and treated, respectively. In this setting, the latter probability is lower for every $\X$ so that the treat-all policy $\policy(\X)\equiv 1$ minimizes the population risk. 

However, decision $\Dec=1$ can expose patients to adverse effects or significant side effects, such as complications, infections, or severe pain, which are not commensurable with the potential benefit $(\Loss=0)$ \citep{vickers2006decision}. Patients who receive $\Dec=1$, but no benefit $(\Loss=1)$ are thereby subjected to unnecessary harm, which may violate the principle of non-maleficence (i.e., ``above all, do no harm'') \citep{smith2005origin}. In the above example, this is the case for half of the treated 80-year-olds. We denote this as \emph{treatment risk} under policy $\policy$, that is, $\Probpolicy(\Loss=1|\Dec=1)$ or the proportion among the treated whose outcome is not beneficial. In this paper, we aim to learn policies that control the treatment risk to be no greater than some tolerated level $\trt$ (non-maleficence) while minimizing the population risk (beneficence). Varying $\trt$ determines a risk trade-off as illustrated in Figure~\ref{fig:motivating-example}. By decreasing the tolerance, the learned allocations $\policy(\X)$ should focus on those patient characteristics $\X$ that are most likely to yield benefits under treatment.

The challenge we consider in this paper is to learn a policy $\policy$ from finite data that ensures that the treatment risk is no greater than $\trt$ with high probability, even in circumstances where the risk is not point identifiable. That is, when the sampling process is not informative enough to determine a unique value of the risk \citep{manski2003identification,manski2007identification}. For observational data, this arises when there are unmeasured confounders, which in general cannot be assessed \citep{kallus2018balanced}.  For randomized trial data, this occurs when the trial and intended populations differ \citep{westreich2019epidemiology}.

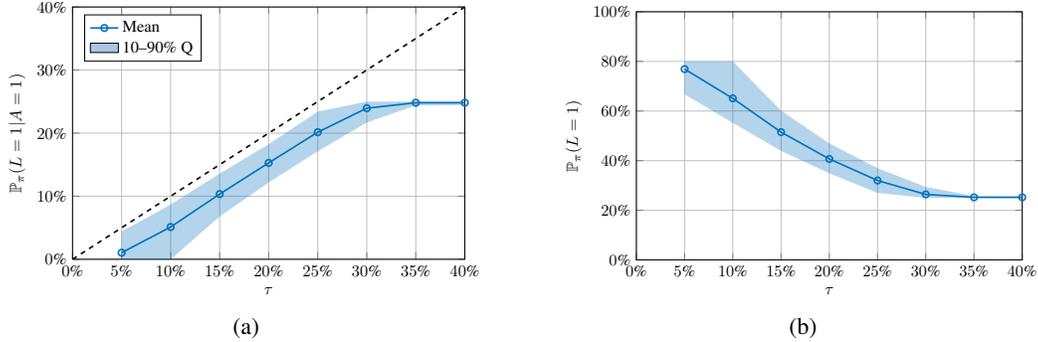
\begin{figure*}
\begin{subfigure}{0.47\textwidth}
    \centering
    \resizebox{\linewidth}{!}{ 
        \begin{tikzpicture}
    \begin{axis}[
        xlabel={$\trt$},
        ylabel={$\mathbb{P}_{\pi}(L = 1|A=1)$},
        xmin=0, xmax=0.4,
        ymin=0, ymax=0.4,
        yticklabel=\pgfmathparse{\tick*100}\pgfmathprintnumber{\pgfmathresult}\%,
        xticklabel=\pgfmathparse{\tick*100}\pgfmathprintnumber{\pgfmathresult}\%,
        legend pos=north west
    ]

    \addplot[color=RoyalBlue, mark = o, line width=1pt] table[x=beta, y=mean constr, col sep=comma] {tikz/data/syn_true1.csv}; 
    \addlegendentry{Mean}
    \addlegendimage{area legend, fill=RoyalBlue!30}
    \addlegendentry{10–90\% Q}

    \addplot[name path=upper, draw=none] table[x=beta, y=constr_q90, col sep=comma] {tikz/data/syn_true1.csv};
    \addplot[name path=lower, draw=none] table[x=beta, y=constr_q10, col sep=comma] {tikz/data/syn_true1.csv};    
    \addplot[RoyalBlue, opacity=0.3] fill between[of=upper and lower];  

    \addplot [
        domain=0.0:0.4, 
        samples=10, 
        color=black,
        line width=1pt,
        dashed
        ]
        {x};
    \end{axis}
\end{tikzpicture}
    }
    \caption{}
    \label{fig:motivating-example-criteria}
\end{subfigure} 
\hfill
\begin{subfigure}{0.47\textwidth}
    \centering
    \resizebox{\linewidth}{!}{ 
        \begin{tikzpicture}
    \begin{axis}[
        xlabel={$\trt$},
        ylabel={$\mathbb{P}_{\pi}(L = 1)$},
        yticklabel=\pgfmathparse{\tick*100}\pgfmathprintnumber{\pgfmathresult}\%,
        xticklabel=\pgfmathparse{\tick*100}\pgfmathprintnumber{\pgfmathresult}\%,
        xmin=0, xmax=0.4,
        ymin=0, ymax=1.0,
        legend pos=north east
    ]

    \addplot[color=RoyalBlue, mark = o, line width=1pt] table[x=beta, y=mean obj, col sep=comma] {tikz/data/syn_true1.csv}; 

    \addplot[name path=upper, draw=none] table[x=beta, y=obj_q90, col sep=comma] {tikz/data/syn_true1.csv};
    \addplot[name path=lower, draw=none] table[x=beta, y=obj_q10, col sep=comma] {tikz/data/syn_true1.csv};    
    \addplot[RoyalBlue, opacity=0.3] fill between[of=upper and lower];  

    \end{axis}
\end{tikzpicture}
    }
    \caption{}
    \label{fig:motivating-example-obj}
\end{subfigure}
\caption{Tolerance $\trt$ versus (a) 
 treatment risk and (b) population risk of learned allocation policies $\policy$. The proposed learning method is set to limit the treatment risk to be no greater than $\trt$ with a probability of at least $90\%$. The shaded regions (10-90th percentiles) represent the resulting risks of policies learned from 1000 different datasets. Thus $\trt$ trades off two types of risks (a) and (b). The details of the example are given in \cref{sec:exp_synthetic}.}  
\label{fig:motivating-example}
\end{figure*}

The main contribution of this work is a method for learning trustworthy treatment allocation policies that aims at reducing the population risk,
\begin{itemize}
    \item while controlling the treatment risk with a high probability in a finite sample setting, and
    \item is valid even under partial identifiability in observational or randomized trial data.
\end{itemize}

The paper is structured as follows: \cref{sec:problem} introduces the problem, followed by a discussion of its connection to existing literature in \cref{sec:background}. In \cref{sec:method}, we present our proposed policy learning method and in \cref{sec:experiments}, we evaluate its performance using both synthetic and real-world data. Finally, \cref{sec:discussion} provides a discussion of the properties of the method.

\section{Problem Formulation}
\label{sec:problem}

We want to learn a treatment allocation policy $\policy: \mathcal{X} \rightarrow \{0,1 \}$ using past data on patient covariates, treatments, and post-decision health losses $\setdata = \{ (\X_i, \Dec_i, \Loss_i) \}$. Let $\Sampling=0$ indicate samples drawn from the patient population. Then the \emph{population risk}, i.e., the proportion of health losses in the patient population, can be expressed as
\begin{equation}
\begin{split}
\Popr(\policy)&\equiv \Probpolicy(\Loss=1|\Sampling=0)\\
&= \rho_{\policy} \underbrace{ \Probpolicy(\Loss=1|\Dec=1, \Sampling=0)}_{\equiv \Tr(\policy)} + (1-\rho_{\policy}) \Probpolicy(\Loss=1|\Dec=0, \Sampling=0),
\end{split}
\label{eq:def_risks}
\end{equation}
where $\rho_{\policy}$ is the proportion treated under $\policy$ and $\Tr(\policy)$ is its \emph{treatment risk}, i.e., the  proportion of treated patients exposed to unnecessary harm. The standard risk minimization approach reduces $\Popr(\policy)$ without any control on $\Tr(\policy)$. By contrast, we may not tolerate a treatment risk greater than $\trt \in (0,1)$, in which case a treatment allocation $\policy$ that solves the constrained problem
\begin{equation}
\begin{split}
\min_{\policy \in \policyclass} \: \Popr(\policy) \quad \text{subject to} \quad \Tr(\policy) \leq \trt,
\end{split}
\label{eq:idealpolicy}
\end{equation}
is non-maleficent and makes an explicit trade-off between health outcomes \citep{wang2018learning,doubleday2022risk,kallus2022s}. For clinical decision policies, it is moreover important to restrict the learning of $\policy$ to a class of interpretable policies $\policyclass$, e.g., rule-based policies \citep{rudin2019stop}.

Let us now consider the process of obtaining training data $\setdata$ from either observational or randomized trial studies. For observational studies, the causal structure is shown in Figure~\ref{fig:DAG_obs} using directed acyclic graphs \citep{peters2017elements}, were there are  unobserved individual factors  $\Unobserved$ that may affect a patient's health loss $\Loss$ as well as the treatment assignment (aka unmeasured confounding). Then the data is drawn independently and identically (i.i.d.) from the unknown marginal distribution $\prob(\X, \Dec, \Loss|\Sampling=0) = \int \prob(\X, u, \Dec,  \Loss|\Sampling=0) du$, where the joint distribution of the process admits a causal factorization
\begin{equation}
\prob( \X, U, \Dec, \Loss|\Sampling=0) =  \underbrace{\prob( \Loss | \Dec, \X,\Unobserved)}_{\text{effect on health loss \:}} \underbrace{\prob(\Dec | \X, \Unobserved)}_{\text{treatment assignment \:}}  \underbrace{p(\X, \Unobserved | \Sampling=0)}_{\text{population characteristics}}.
\label{eq:jointdistribution_obs}
\end{equation}
For randomized trial studies, the structure is given in Figure~\ref{fig:DAG_rct}, were unobserved individual factors  $\Unobserved$ may affect the health loss $\Loss$ as well as the selection into trials (including self-selection factors). The selection is indicated by $\Sampling=1$ and leads to a trial population whose characteristics may differ from the intended patient population \citep{westreich2019epidemiology}. Then data is drawn i.i.d. from the unknown marginal distribution $\prob(\Dec, \X, \Loss|\Sampling=1) = \int \prob(\X, u, \Dec, \Loss|\Sampling=1) du$, where the joint distribution is causally factorized as
\begin{equation}
\prob( \X, \Unobserved, \Dec, \Loss|\Sampling=1) =  \prob( \Loss |  \Dec,\X,\Unobserved) \underbrace{\prob(\Dec )}_{\text{randomized assignment \:}}  \underbrace{p(\X, \Unobserved | \Sampling=1)}_{\text{trial population characteristics}}.
\label{eq:jointdistribution_rct}
\end{equation}
The risks under $\policy$ in (\ref{eq:def_risks}) are obtained from the joint distribution  
\begin{equation}
\probpolicy( \X, \Unobserved, \Dec, \Loss|\Sampling=0) =  \prob( \Loss |  \Dec, \X,\Unobserved) \; \underbrace{\ind{\Dec=\policy(\X)}}_{\text{assignment by policy \:}} \; p(\X, \Unobserved | \Sampling=0).
\label{eq:jointdistribution_policy}
\end{equation}
It is not possible to uniquely express either $\Popr(\policy)$ or $\Tr(\policy)$ in terms of the data distributions $\prob( \X,\Dec, \Loss|\Sampling=s)$ due to the causal influence of unobserved $\Unobserved$ in either (\ref{eq:jointdistribution_obs}) or (\ref{eq:jointdistribution_rct}). That is, neither the risks in (\ref{eq:def_risks}) are \emph{point identifiable} \citep{peters2017elements, westreich2019epidemiology}. Moreover, the presence of $\Unobserved$ threatens the validity of the studies, since it is in general untestable \citep{wasserman2013all}.

To tackle the policy learning problem, we consider using learned models for the treatment assignment, $\widehat{\prob}(\Dec | \X)$, and for the selection into trials, $\widehat{\prob}(\Sampling  | \X)$, when using observational or randomized trial data, respectively.  For observational studies, we employ the approach of \citet{tan2006distributional} and consider the odds of assigning treatment $\Dec$ to be miscalibrated by a factor of at most $\GammaPol$
\begin{equation}
\frac{1}{\GammaPol} \; \leq \; \underbrace{\frac{1 - \prob(\Dec | \X, \Unobserved)}{\prob(\Dec | \X, \Unobserved)}}_{\text{unknown assignment odds}} \bigg/ \underbrace{\frac{1 - \widehat{\prob}(\Dec | \X)}{\widehat{\prob}(\Dec | \X)}}_{\text{nominal assignment odds}} \; \leq \; \GammaPol.
\label{eq:Gamma_bound_a}
\end{equation}
For trial studies, we similarly consider the selection odds to be miscalibrated by a factor of at most $\GammaSel$
\begin{equation}
\frac{1}{\GammaSel} \; \leq \; \underbrace{\frac{\prob(\Sampling = 0 | \X,\Unobserved)}{\prob(\Sampling = 1 | \X,\Unobserved)}}_{\text{unknown selection odds}} \bigg/ \underbrace{\frac{\widehat{\prob}(\Sampling = 0 | \X)}{\widehat{\prob}(\Sampling = 1 | \X)}}_{\text{nominal selection odds}} \; \leq \; \GammaSel.
\label{eq:Gamma_bound_s}
\end{equation}
We shall refer to $\Gamma \geq 1$ as the \emph{degree of miscalibration}. A standard means of benchmarking a range of credible values for $\Gamma$ is to remove individual covariates $\X_j$ and treat them as unobserved $\Unobserved$ to obtain odds ratios as above \citep{manski2007identification,ichino2008temporary,huang2024sensitivity}. We demonstrate this approach in \cref{sec:app_benchmarking}. As will be shown in Lemma~\ref{lem:partialidentifiability} below, under (\ref{eq:Gamma_bound_a}) or (\ref{eq:Gamma_bound_s}) the risks $\Popr(\policy)$ and $\Tr(\policy)$ in (\ref{eq:def_risks}) can assume a range of possible values. That is, the risks are \emph{partially identifiable}. 

For any specified degree of miscalibration, the goal is to learn a policy $\policy$ that aims to minimize the population risk using $\setdata$ drawn from 
$\prob( \X, \Dec, \Loss|\Sampling=s)$ such that its treatment risk will not exceed $\tau$ with a high probability. That is, the learning method should 
produce policies that satisfy
\begin{equation}
\boxed{\Prob \big( \: \Tr(\policy) \leq \trt  \:| \: \Sampling=s \:) \geq 1-\miscoveragerate}
\label{eq:treatmentriskguarantee}
\end{equation}
at any given confidence level $1-\miscoveragerate$ and for any degree of model miscalibration up to $\Gamma$. Satisfying (\ref{eq:treatmentriskguarantee}) provides a certification of non-maleficence for trustworthy treatment allocations.

\remark While the causal structures for observational and randomized trial studies represent the most common scenarios, our framework also handles a third case in which observational studies are conducted with respect to a study population that may differ from the patient population.

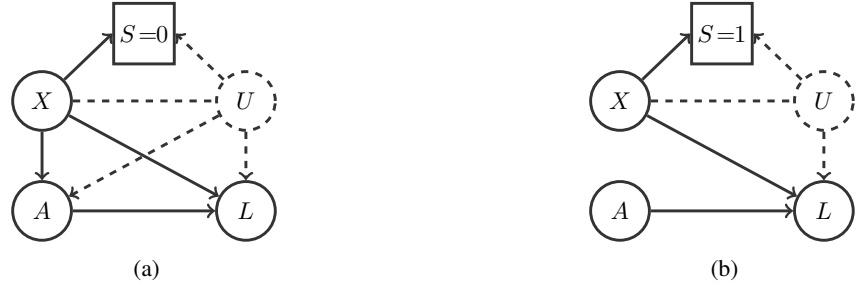
\begin{figure*}
\begin{subfigure}{0.45\linewidth}
\centering
        \resizebox{0.6\linewidth}{!}{ 
                \begin{tikzpicture}[
    square/.style={regular polygon,regular polygon sides=4},
    observednode/.style={circle, draw=black!80, fill=white!5, very thick, minimum size=8.5mm},
    unobservednode/.style={circle, draw=black!80, dashed, fill=white!5, very thick, minimum size=8.5mm},
    fixednode/.style={square, draw=black!80, fill=white!5, very thick, minimum size=5mm, inner sep = -0.12em},
    ]
    \node[fixednode]         (S)    {$\Sampling\!=\!\!0$};
    \node[observednode]      (X)    [below left = 0.2 cm and 0.7 cm of S]    {$\X$};
    \node[unobservednode]    (U)    [below right = 0.2 cm and 0.7 cm of S]   {$\Unobserved$};
    \node[observednode]      (A)    [below = 0.7 cm of X]    {$\Dec$};
    \node[observednode]      (L)    [below = 0.7 cm of U]   {$\Loss$};

    \draw[->, very thick, draw=black!80] (X) -- (A);
    \draw[->, very thick, draw=black!80] (A) -- (L);
    \draw[-, very thick, dashed, draw=black!80] (X) -- (U);
    \draw[->, very thick, dashed, draw=black!80] (U) -- (S.east);
    \draw[->, very thick, dashed, draw=black!80] (U) -- (L);
    \draw[->, very thick, draw=black!80] (X) -- (L);
    \draw[<-, very thick, draw=black!80] (S.west) -- (X);
    \draw[->, very thick, dashed, draw=black!80] (U) -- (A);
\end{tikzpicture}  
            }
        \caption{}
        \label{fig:DAG_obs} 
\end{subfigure}
\hfill
\begin{subfigure}{0.45\linewidth}
\centering
        \resizebox{0.6\linewidth}{!}{ 
                \begin{tikzpicture}[
    square/.style={regular polygon,regular polygon sides=4},
    observednode/.style={circle, draw=black!80, fill=white!5, very thick, minimum size=8.5mm},
    unobservednode/.style={circle, draw=black!80, dashed, fill=white!5, very thick, minimum size=8.5mm},
    fixednode/.style={square, draw=black!80, fill=white!5, very thick, minimum size=5mm, inner sep = -0.12em},
    ]
    \node[fixednode]         (S)    {$\Sampling\!=\!\!1$};
    \node[observednode]      (X)    [below left = 0.2 cm and 0.7 cm of S]    {$\X$};
    \node[unobservednode]    (U)    [below right = 0.2 cm and 0.7 cm of S]   {$\Unobserved$};
    \node[observednode]      (A)    [below = 0.7 cm of X]    {$\Dec$};
    \node[observednode]      (L)    [below = 0.7 cm of U]   {$\Loss$};

    \draw[->, very thick, draw=black!80] (A) -- (L);
    \draw[-, very thick, dashed, draw=black!80] (X) -- (U);
    \draw[->, very thick, dashed, draw=black!80] (U) -- (S.east);
    \draw[->, very thick, dashed, draw=black!80] (U) -- (L);
    \draw[->, very thick, draw=black!80] (X) -- (L);
    \draw[<-, very thick, draw=black!80] (S.west) -- (X);
\end{tikzpicture}  
            }
        \caption{}
        \label{fig:DAG_rct} 
\end{subfigure}
    \caption{Structural causal models, specified by acyclic directed graphs, that describe the data generating process in (a) observational studies and (b) randomized trials (b). In (a), both observed covariates $\X$ and unbserved factors $\Unobserved$ may jointly influence treatment decisions $\Dec$ and health loss $\Loss$, introducing unmeasured confounding. In (b), unobserved factors $\Unobserved$ may additionally influence  selection into trials, introducing unmeasured selection factors. The (conditional) indicator $\Sampling$ determines inclusion in a randomized trial study.}
    \label{fig:DAG_policy}      
\end{figure*}

\section{Background}
\label{sec:background}
Treatment effect estimation has traditionally focused on the average treatment effects (ATE) across populations, typically using data from randomized controlled trials \citep{imbens2015causal}. Optimal treatment allocation is instead focused on individual-level heterogeneity to assign treatments to those are most likely to benefit, see for example; \citet{ manski2004statistical, dudik2011doubly, qian2011performance, zhang2012robust, zhao2012estimating, swaminathan2015counterfactual, athey2016recursive, wager2018estimation, nie2021quasi}, or the overview in \citet{hoogland2021tutorial} for the case of binary outcomes. An important recent extension of this problem formulation is policy learning with continuous health outcomes under partial identifiability \citep{kallus2021minimax, cui2021individualized, christensen2023optimal, adjaho2023externally, yata2025optimal, ben2025safe}.

Another related line of work is policy learning under constraints \citep{kitagawa2018should,athey2021policy}. Specifically, \citet{wang2018learning} and \citet{doubleday2022risk} consider problems in which a primary health outcome and a secondary adverse health variable are observed. They learn linear or decision-rule policies that maximize the benefit while constraining the expected adverse health outcome, i.e., the specified harm. The proposed learning methods are, however, not certified to control the harm. Provided such secondary health losses, the method developed herein can readily control the harm with confidence. A different notion of harm, called the `fraction of negatively affected', was defined in \citet{kallus2022s}. It is based on the binary \emph{potential} health outcomes for each patient under hypothetical $\Dec=0$ and $\Dec=1$ \emph{simultaneously}. While this counterfactual harm is unobservable, a novel bound and asymptotically valid confidence interval for it was derived by \citet{kallus2022s}. The bound was tightened by \citet{li2023trustworthy}, who employed it in a harm constrained policy learning method that achieves asymptotic harm control. However, the bound assumes that the potential outcomes are nonnegatively correlated, which is unverifiable since such counterfactuals can never be simultaneously observed (cf. \citet{sarvet2023perspective}).

Simple treatment policies play an important role in ensuring effective, transparent, and scalable care. More complex data-driven models can often optimize treatment decisions, but tend to lack interpretability and, importantly, may be difficult to implement in practice \citep{rudin2019stop, athey2021policy}. Simple policies, on the other hand, enhance clinical interpretability, allowing healthcare providers to understand and apply treatment recommendations with confidence \citep{kitagawa2018should}. This is particularly important in high-stakes medical settings where decisions must be made reliably without delay \citep{caruana2015intelligible}.

\citet{bates2021distribution} constructs prediction sets that provide finite-sample guarantees on coverage probability, without requiring strong distributional assumptions about the data. We extend this technique to the policy setting to give guarantees on controlling the treatment risk, even in the case of confounding or selection bias.

\section{Method}
\label{sec:method}
We will now propose a method to find a policy $\policy(\X)$ that satisfies \cref{eq:treatmentriskguarantee} with a user-specified treatment risk tolerance $\trt$. For notational convenience, we let $\Ehat_{\npolicy}[Z]$ denote the empirical mean, $\frac{1}{\npolicy}\sum_{i=1}^{\npolicy}Z_i$. The proofs of the results presented here appear at the end of the section.

Since the risks in (\ref{eq:def_risks}) are not point identifiable, we first derive their upper bounds using the data distributions for any given degree of miscalibration $\Gamma \geq 1$. 
\begin{lemma}\label{lem:partialidentifiability}
The population and treatment risks are upper bounded by
\begin{equation}
\Popr(\policy) \leq \E\left[\Loss \cdot \Weightupperbound \big|\Sampling=s\right] \quad \text{and} \quad \Tr(\policy) \leq \E \left[L \cdot \frac{\ind{\Dec =1}}{\probpolicy(\Dec=1|\Sampling=0)} \cdot \overline{\Weight}^{\Gamma} \big| \Sampling=s\right],
\label{eq:upperbounds}
\end{equation}
where $\overline{\Weight}^{\Gamma}$ denote  weights. These are given by
\begin{equation}
\overline{\Weight}^{\Gamma} = \left[1 + \GammaPol  \big(\probhat(\Dec| \X)^{-1} - 1 \big) \right] 
\cdot \ind{\Dec = \policy(X)},
\label{eq:weights_upper_a}
\end{equation}
in the case of observational data $(\Sampling=0)$,
and
\begin{equation}
\overline{\Weight}^{\Gamma} =
\GammaSel \cdot
\frac{\widehat{\prob}(\Sampling = 0 | \X)}{\widehat{\prob}(\Sampling = 1 | \X)} \cdot
\frac{\prob(\Sampling = 1)}{\prob(\Sampling = 0)} \cdot
\frac{\ind{\Dec = \policy(X)}}{\prob(\Dec|\X)},
\label{eq:weights_upper_s}
\end{equation}
when using randomized data $(\Sampling=1)$. Moreover, if $\Gamma = 1$, then (\ref{eq:upperbounds}) holds with equalities.
\end{lemma}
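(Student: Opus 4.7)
The plan is to reexpress the policy-induced risks $\Popr(\policy) = \E_{\probpolicy}[\Loss\mid\Sampling=0]$ and $\Tr(\policy) = \E_{\probpolicy}[\Loss\,\ind{\Dec=1}\mid\Sampling=0]/\probpolicy(\Dec=1\mid\Sampling=0)$ as expectations under the data distribution $\prob(\cdot\mid\Sampling=s)$ via an inverse-probability change of measure, and then bound the resulting (unknown) weight from above using the miscalibration constraint. Because $\Loss \geq 0$, any pointwise upper bound on the weight propagates to an upper bound on the risk. The argument is carried out separately for $s=0$ and $s=1$.

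\textbf{Observational case ($s=0$).} Dividing the policy factorization (\ref{eq:jointdistribution_policy}) by the observational factorization (\ref{eq:jointdistribution_obs}) shows that the Radon--Nikodym derivative on the joint $(\X,\Unobserved,\Dec,\Loss)$ space is $\Weight = \ind{\Dec=\policy(\X)}/\prob(\Dec\mid\X,\Unobserved)$. Marginalizing $\Unobserved$ yields $\Popr(\policy) = \E[\Loss \cdot \Weight \mid \Sampling=0]$. Rearranging (\ref{eq:Gamma_bound_a}) gives $\prob(\Dec\mid\X,\Unobserved)^{-1} - 1 \leq \GammaPol \bigl(\widehat{\prob}(\Dec\mid\X)^{-1} - 1\bigr)$ pointwise in $(\Dec,\X,\Unobserved)$, so $\Weight \leq \overline{\Weight}^{\Gamma}$ with $\overline{\Weight}^{\Gamma}$ as in (\ref{eq:weights_upper_a}), and the nonnegativity of $\Loss$ delivers the population-risk bound.

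\textbf{Trial case ($s=1$).} Here an additional selection correction is needed. By (\ref{eq:jointdistribution_rct}),
\begin{equation*}
p(\X,\Unobserved\mid\Sampling=0) = \frac{\prob(\Sampling=0\mid\X,\Unobserved)}{\prob(\Sampling=1\mid\X,\Unobserved)} \cdot \frac{\prob(\Sampling=1)}{\prob(\Sampling=0)} \cdot p(\X,\Unobserved\mid\Sampling=1),
\end{equation*}
and the randomized assignment makes $\prob(\Dec\mid\X,\Unobserved)=\prob(\Dec)=\prob(\Dec\mid\X)$. Combining these, $\Popr(\policy) = \E[\Loss \cdot \Weight \mid \Sampling=1]$ with $\Weight = \frac{\prob(\Sampling=0\mid\X,\Unobserved)}{\prob(\Sampling=1\mid\X,\Unobserved)} \cdot \frac{\prob(\Sampling=1)}{\prob(\Sampling=0)} \cdot \frac{\ind{\Dec=\policy(\X)}}{\prob(\Dec\mid\X)}$. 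The selection-odds bound (\ref{eq:Gamma_bound_s}) upper-bounds the first factor by $\GammaSel \cdot \widehat{\prob}(\Sampling=0\mid\X)/\widehat{\prob}(\Sampling=1\mid\X)$, giving $\Weight \leq \overline{\Weight}^{\Gamma}$ as in (\ref{eq:weights_upper_s}).

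The treatment-risk bounds are obtained by running exactly the same IPW argument on $\E_{\probpolicy}[\Loss\,\ind{\Dec=1}/\probpolicy(\Dec=1\mid\Sampling=0)\mid\Sampling=0]$; the additional nonnegative factor $\ind{\Dec=1}/\probpolicy(\Dec=1\mid\Sampling=0)$ preserves the inequality. When $\Gamma=1$, both (\ref{eq:Gamma_bound_a}) and (\ref{eq:Gamma_bound_s}) hold with equality and the pointwise identity $\Weight = \overline{\Weight}^{\Gamma}$ turns the change of measure into an identity, yielding equalities throughout. The only subtle point will be verifying that although $\Weight$ involves the latent $\Unobserved$, its pointwise upper bound $\overline{\Weight}^{\Gamma}$ depends only on the observables $(\Dec,\X)$, which is what makes the bound estimable from the observed data distribution after marginalizing $\Unobserved$.
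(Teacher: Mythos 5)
Your proposal is correct and follows essentially the same route as the paper: both express the risks as expectations under the data distribution with the importance weight $\Weight_\policy$ given by the ratio of the policy-induced joint (\ref{eq:jointdistribution_policy}) to the data-generating joint (\ref{eq:jointdistribution_obs}) or (\ref{eq:jointdistribution_rct}), then bound this weight pointwise by $\overline{\Weight}^{\Gamma}$ using (\ref{eq:Gamma_bound_a}) or (\ref{eq:Gamma_bound_s}), with equality when $\Gamma=1$. Your write-up simply spells out the weight derivations and the odds-rearrangement more explicitly than the paper's terse proof, which is fine.
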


\begin{remark}
Since the policy only takes $\X$ as an input, its treatment probability $\probpolicy(\Dec=1|\Sampling=0) = \int 
\ind{1=\policy(x)} \; p(x | \Sampling=s) dx$ is point identifiable.
\end{remark}

To achieve (\ref{eq:treatmentriskguarantee}), we apply sampling splitting by randomly splitting $\settrial$ into two parts $\setpolicy$, and $\setrisk$ with $\npolicy$ and $\nrisk$ samples, respectively. The first set $\setpolicy$ is used to form an empirical analogue of (\ref{eq:idealpolicy}), along with the upper bounds (\ref{eq:upperbounds}), to learn a nominal policy  
\begin{equation}
\begin{aligned}
\policy(\X; t) = \argmin_{\policy \in \Pi} \: \Ehat_{\npolicy} \left[L \cdot \overline{\Weight}^{\Gamma}\right]  \;
    \textrm{subject to } \; \Ehat_{\npolicy}\left[L \cdot \frac{\ind{\Dec =1}}{\probpolicy(\Dec=1|\Sampling=0)} \cdot \overline{\Weight}^{\Gamma}\right] \leq \trtparam, 
\end{aligned}
\label{eq:minimization_policy}
\end{equation}
where $\trtparam \in (0,1)$ is a nominal tolerance parameter. Thus (\ref{eq:minimization_policy}) yields a mapping from $(\trtparam, \setpolicy, \Gamma )$ to a nominal policy $\policy(\X; \trtparam)$ and we let $\Trworstcase(\trtparam) = \E[V(\trtparam)|\Sampling=s]$ denote the bound on its treatment risk in (\ref{eq:upperbounds}), where  $ V(\trtparam) = L \cdot \frac{\ind{\Dec =1}}{\probpolicy(\Dec=1|\Sampling=0)} \cdot \overline{\Weight}^{\Gamma}(\trtparam)$. 

The second dataset $\setrisk$ is now used to construct an upper confidence bound of its treatment risk, denoted $\Trconf(\trtparam)$, such that 
\begin{equation}
    \Prob(\Trworstcase(\trtparam) \leq \Trconf(\trtparam)|\Sampling=s) \geq 1 - \miscoveragerate.
    \label{eq:upperconfidencebound}
\end{equation}

\begin{theorem}
\label{sec:main_theorem}
Let $\Trconf(\trtparam)$ denote an upper bound that satisfies (\ref{eq:upperconfidencebound}). For any specified treatment risk tolerance $\trt$, define the empirical tolerance
\begin{equation}
\trtparam_{\nrisk} = \argmin_{t \in (0,1)} \; \Ehat_{\nrisk} \left[L \cdot \overline{\Weight}^{\Gamma}(t)\right] \quad \text{subject to} \; \trt > \Trconf(\trtparam'), \quad \forall \trtparam' \leq \trtparam.
    \label{eq:choose_trt} 
\end{equation}
If the nominal $\policy(\X; \trt)$ obtained from  (\ref{eq:minimization_policy}) achieves the constraint with equality, then $\policy(\X; \trtparam_n )$ is certified to control the treatment risk according to (\ref{eq:treatmentriskguarantee}). That is, it satisfies
    $\Tr(\policy) \leq \trt$
with probability no less than $1-\miscoveragerate$ under all degrees of miscalibration up to $\Gamma$. 
\end{theorem}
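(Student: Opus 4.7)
The plan is to adapt the fixed-sequence/monotone-test argument from \citet{bates2021distribution} to the partially identified setting. The key reduction is that by Lemma~\ref{lem:partialidentifiability} we have $\Tr(\policy(\cdot; \trtparam_\nrisk)) \leq \Trworstcase(\trtparam_\nrisk)$, so it suffices to bound $\Trworstcase(\trtparam_\nrisk) \leq \trt$ with probability at least $1-\miscoveragerate$. I would first condition on the split $\setpolicy$, so that the family $\{\policy(\cdot; t) : t \in (0,1)\}$ becomes a deterministic function of $t$ and $\Trworstcase(t)$ is non-random; the upper bound $\Trconf(t)$, built from the independent split $\setrisk$, then continues to satisfy the pointwise guarantee~(\ref{eq:upperconfidencebound}) at every fixed $t$.

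Next, I would argue that $\Trworstcase(t)$ is non-decreasing in $t$: enlarging $t$ can only weakly enlarge the feasible set of~(\ref{eq:minimization_policy}), so the optimizer $\policy(\cdot; t)$ may exploit additional slack, and the value of its treatment-risk constraint functional cannot decrease. Under the hypothesis that the nominal constraint in~(\ref{eq:minimization_policy}) is tight at $t = \trt$, the tightness pins the population counterpart down to $\Trworstcase(\trt) = \trt$, so that the critical value $\trtparam^* \equiv \sup\{t : \Trworstcase(t) \leq \trt\}$ equals $\trt$ and $\Trworstcase(\trtparam^*) = \trt$. Applying~(\ref{eq:upperconfidencebound}) at this deterministic $\trtparam^*$ then gives $\Prob\!\left(\Trconf(\trtparam^*) \geq \trt \mid \Sampling = s\right) \geq 1-\miscoveragerate$; call this event $E$.

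The final step is to observe that on $E$ the feasibility requirement in~(\ref{eq:choose_trt}) fails at $\trtparam' = \trtparam^*$, and since that requirement is universally quantified over all $\trtparam' \leq \trtparam$, no $\trtparam \geq \trtparam^*$ can be feasible. Hence $\trtparam_\nrisk < \trtparam^*$ on $E$, monotonicity yields $\Trworstcase(\trtparam_\nrisk) \leq \Trworstcase(\trtparam^*) = \trt$, and Lemma~\ref{lem:partialidentifiability} together with marginalization over $\setpolicy$ gives the stated guarantee. The main obstacle will be establishing the monotonicity and well-posedness of $t \mapsto \policy(\cdot; t)$: the argmin in~(\ref{eq:minimization_policy}) need not be unique and the map may jump, so the correct identification of $\trtparam^*$ from the equality hypothesis---bridging the empirical objective on $\setpolicy$ with its population counterpart---requires either a tie-breaking rule or a mild continuity assumption on $\policyclass$, which I would state explicitly to make the single-point application of~(\ref{eq:upperconfidencebound}) at $\trtparam^*$ rigorous.
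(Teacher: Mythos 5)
Your overall architecture is the same as the paper's (split the data, treat the family $\{\policy(\X;t)\}$ as fixed given $\setpolicy$, use the tightness hypothesis to write $\Trworstcase(\trt)=\trt$, apply the pointwise bound (\ref{eq:upperconfidencebound}) at the single deterministic point, and convert infeasibility in (\ref{eq:choose_trt}) into $\trtparam_{\nrisk}<\trt$), but the step you lean on to finish is a genuine gap: you need $t\mapsto\Trworstcase(t)$ to be non-decreasing in order to pass from $\trtparam_{\nrisk}<\trtparam^{*}$ to $\Trworstcase(\trtparam_{\nrisk})\leq\Trworstcase(\trtparam^{*})=\trt$, and this monotonicity does not hold in general. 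The nested-feasible-set argument only gives (up to ties) monotonicity of the \emph{empirical} constraint value $\Ehat_{\npolicy}[V(t)]$ at the optimizer of (\ref{eq:minimization_policy}); the population quantity $\Trworstcase(t)=\E[V(t)]$ under the learned policy can be non-monotone, because a slightly larger $t$ may switch the optimizer to a different tree whose empirical constraint value is larger but whose population value is smaller due to sampling fluctuations on $\setpolicy$. No tie-breaking rule or continuity assumption on $\policyclass$ repairs this, since the failure is in the bridge from empirical to population values, not in the well-posedness of the argmin.

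The paper avoids monotonicity entirely and uses a domination property instead: any $\policy(\X;t)$ returned by (\ref{eq:minimization_policy}) satisfies $\Ehat_{\npolicy}[V(t)]\leq t$, and taking expectations gives $\Trworstcase(t)\leq t$ for every $t$ (this is the paper's (\ref{eq:monotonicity}), which despite its label is a domination bound, not monotonicity of the map $t \mapsto \Trworstcase(t)$). With this substitution your own argument closes: on your event $E$ (i.e., $\Trconf(\trt)\geq\Trworstcase(\trt)=\trt$), feasibility in (\ref{eq:choose_trt}) fails at $\trtparam'=\trt$, so $\trtparam_{\nrisk}<\trt$, and then $\Trworstcase(\trtparam_{\nrisk})\leq\trtparam_{\nrisk}<\trt$ directly — no monotonicity, no $\trtparam^{*}=\sup\{t:\Trworstcase(t)\leq\trt\}$ detour (you can apply (\ref{eq:upperconfidencebound}) at the fixed point $\trt$ itself, exactly as the paper does in its contrapositive form: the bad event $\Trworstcase(\trtparam_{\nrisk})>\trt$ forces $\trtparam_{\nrisk}\geq\Trworstcase(\trtparam_{\nrisk})>\trt$, hence $\trt>\Trconf(\trt)$ by (\ref{eq:parameterproperty}), which together with $\Trworstcase(\trt)=\trt$ has probability at most $\miscoveragerate$). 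So the fix is to replace your monotonicity claim by the observation that every nominal policy satisfies its own empirical constraint, which in expectation yields $\Trworstcase(t)\leq t$.
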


\begin{remark} There are several possible confidence bounds (\ref{eq:upperconfidencebound}). One method is based on Bentkus inequality which becomes tight for binary data \citep{bates2021distribution}. It is obtained as follows: Let $V_{\text{max}}$ denote an upper limit on $V(\trtparam)$ which contains the binary $\Loss$. We therefore expect $V(\trtparam)$ to be either 0 or clustered closer around some values  towards $V_{\text{max}}$. A confidence bound $\Trconf(\trtparam)$ that is tight for a binary loss is useful in this case. (With a smaller variance, other bounds may be tighter, see \citep{bates2021distribution}.) Define    
\begin{equation*}
    g(a;\Trworstcase(\trtparam)) =\min \left( \; 
    \exp\{ -nh(a;\Trworstcase(\trtparam))     \}, \;
    e \cdot \textsc{cdf}(\lceil \nrisk a \rceil  ; n,\Trworstcase(\trtparam) ) \;\right), 
\end{equation*}
where
\begin{equation*}
    h(a;\Trworstcase) = a \log(a/\Trworstcase) + (1-a)\log((1-a)/(1-\Trworstcase)),
\end{equation*}
and $\textsc{cdf}(\cdot ; n,p )$ is the cumulative distribution function of a binomial random variable with sample size $\nrisk$ and success probability $p$.
Then 
\begin{equation}
    \Trconf(\trtparam) = \sup\left\{\Trworstcase: g\left(\frac{\Ehat_{\nrisk}[V(\trtparam)]}{V_{\text{max}}}; \frac{\Trworstcase}{V_{\text{max}}}\right) \geq \miscoveragerate \right\},
    \label{eq:bound_bentkus}
\end{equation}
is a valid upper confidence bound (\ref{eq:upperconfidencebound}). The complete method is summarized in \cref{alg:policy}.
\end{remark}

\begin{remark} The empirical tolerance $\trtparam_{\nrisk}$ is chosen to yield the lowest bound on the in-sample population risk while satisfying the constraint with respect to the upper confidence bound.
\end{remark}

\begin{remark} An alternative method to construct a policy  $\policy(\X)$ that fulfills $\Tr(\policy) \leq \trt $ with a high probability in average is presented in \cref{sec:app_alt_method}.
\end{remark}

\begin{algorithm}[H]
    \caption{Learn $\policy(\X)$}     
    \label{alg:policy}
    \begin{algorithmic}[1]
        \INPUT Data $\setdata$, degree of miscalibration $\Gamma$, policy class $\Pi$, parameter $\trt$, confidence level $1-\smallprob$.
        \OUTPUT Policy $\policy(\X)$
        \STATE Randomly split $\settrial$ into $\setpolicy$, and $\setrisk$.
        \FOR {$\trtparam \in (0,1)$}
            \STATE Learn $\policy(\X; t)$ as in \cref{eq:minimization_policy} using $\setpolicy$.
            \STATE Save $\policy(\X; t)$.
        \ENDFOR
        \FORALL{saved $\policy(\X; t)$}
        \STATE Compute upper bound $\Trconf(\trtparam)$ from \cref{eq:upperconfidencebound}, e.g., via \cref{eq:bound_bentkus}, using $\setrisk$.
        \ENDFOR
        \STATE Select $\trtparam_n$ as in \cref{eq:choose_trt} using $\setrisk$.
    \end{algorithmic}
\end{algorithm}

\begin{proof}[Proof of \cref{lem:partialidentifiability}]
We start by proving the bound for $\Tr(\policy)$. Since $\Loss \in \{0,1 \}$, we have that
\begin{equation*}
\Tr(\policy) = \Prob_{\policy}(\Loss=1|A = 1, \Sampling=0) = \E_{\policy}[\Loss|A = 1,\Sampling=0] = \sum_{\ell} \ell \cdot  p_{\policy}(\ell|\Dec = 1,\Sampling=0),
\end{equation*}
and, moreover,  using the chain rule: $p_{\policy}(\ell|A = 1,\Sampling=0) = p_{\policy}(\ell,A = 1|\Sampling=0) / p_{\policy}(A = 1|\Sampling=0)$, it follows that
\begin{equation*}
\begin{split}
\Tr(\policy)&= \sum_\ell \sum_{a} \ell \frac{p_{\policy}(\ell, a|\Sampling=0)}{p_{\policy}(\Dec = 1|\Sampling=0)} \ind{a = 1} = \E_{\policy}\left[ \Loss \frac{\ind{\Dec =1}}{p_{\policy}(\Dec=1|\Sampling=0)} | \Sampling = 0\right].    
\end{split}
\end{equation*}
Next, we note that
\begin{equation*}
   \E_{\policy}[Z|\Sampling=0] = \E \left[ Z \cdot \frac{\probpolicy( \X, \Unobserved,  \Dec,\Loss|\Sampling=0) }{\prob( \X, \Unobserved, \Dec, \Loss|\Sampling=s)} |\Sampling=s \right] = \E \left[ Z \cdot \Weight_\policy |\Sampling=s \right],
\end{equation*}
where $\Weight_\policy = \probpolicy(\X, \Unobserved, \Dec,  \Loss|\Sampling=0) / \prob( \X, \Unobserved, \Dec, \Loss|\Sampling=s)$ is an importance weight  given by (\ref{eq:jointdistribution_obs}) or (\ref{eq:jointdistribution_rct}), depending on whether we are using observational or trial data. The weight is upper bounded $\Weight_\policy \leq \overline{\Weight}^{\Gamma}$  from  (\ref{eq:weights_upper_a}) or (\ref{eq:weights_upper_s}), using (\ref{eq:Gamma_bound_a}) or (\ref{eq:Gamma_bound_s}) correspondingly. This proves the upper bound for $\Tr(\policy)$. 
The bound for $\Popr(\policy)$ is analogous.
\end{proof}

\begin{proof}[Proof of \cref{sec:main_theorem}]
For notational convenience, we drop the symbol of conditioning on $\Sampling=s$ in the expressions that follow. Using $\trtparam_n$ in (\ref{eq:choose_trt}), we want to ensure that 
\begin{equation}
\Prob( \Trworstcase(\trtparam_n) > \trt ) \leq \miscoveragerate.
\label{eq:thmclaim}
\end{equation}
From (\ref{eq:upperconfidencebound}), we have that
\begin{equation}
\Prob( \Trworstcase(\trt) > \Trconf(\trt) ) \leq \miscoveragerate.
\label{eq:upperconfidence_alt}
\end{equation}
For policy $\policy(\X, \trtparam)$, which satisfies the constraint in (\ref{eq:minimization_policy}),  $\Ehat_{\npolicy}[V(\trtparam)] = \frac{1}{\npolicy}\sum^{\npolicy}_{i=1} V_i(t) \leq \trtparam$ so that after applying an expectation on both sides of the inequality we have
\begin{equation}
t \geq \frac{1}{\npolicy}\sum^{\npolicy}_{i=1} \E[V_i(t)] = \E[V(t)] = \Trworstcase(\trtparam).
\label{eq:monotonicity}
\end{equation}
By construction of the empirical tolerance $\trtparam_{\nrisk}$ in (\ref{eq:choose_trt}), we have that
\begin{equation}
\forall t \leq \trtparam_{\nrisk} \; : \; \trt > \Trconf(t).
\label{eq:parameterproperty}
\end{equation}
For the event under consideration $ \Trworstcase(\trtparam_n) > \trt$, it follows that  $\trtparam_{\nrisk} \geq \Trworstcase(\trtparam_{\nrisk}) > \trt \geq \Trworstcase(\trt)$ using (\ref{eq:monotonicity}). 
Therefore, $\trtparam_{\nrisk} > \trt$ and by (\ref{eq:parameterproperty}) we have $\trt > \Trconf(\trt)$. Since $\policy(\X; \trt)$ yields
$\Trworstcase(\trt) = \E\left[ \Ehat_{\npolicy}[V(\trt)] \right] = \trt$ via (\ref{eq:monotonicity}), it follows that
\begin{equation}
\Trworstcase(\trt) > \Trconf(\trt).
\end{equation}
By (\ref{eq:upperconfidence_alt}), this event occurs with a probability of at most $\miscoveragerate$. The proof technique is similar to that employed in \citep{bates2021distribution}.

\end{proof}

\section{Experimental}
\label{sec:experiments}
To illustrate the properties of the proposed method of treatment risk control, we use both synthetic and real-world datasets. For concreteness, we consider the policy class $\policyclass$ in (\ref{eq:minimization_policy}) to be a family of fast-and-frugal decision trees \citep{gigerenzer2000simple}. The learning problem is solved following the greedy approach proposed in \citet{zhang2015using} with the key difference that the constraint is also evaluated in each search step. Further details are provided in \cref{sec:fft}.

\subsection{Synthetic Data}
\label{sec:exp_synthetic}

We begin by describing an observational data distribution
(\ref{eq:jointdistribution_obs}). In the first case, there is no $\Unobserved$ that can affect both treatment and health outcomes. The covariates $\X$ are two-dimensional and $\prob(\X|\Sampling=0)$ is given by
\begin{equation}
    \X = \begin{bmatrix} \X_1 \\ \X_2 \end{bmatrix} |\Sampling=0 \sim \mathcal{U}(30,80)^2.
    \label{eq:syn_gen_x}
\end{equation} 
The assignment of treatment actions $\Dec \in \{0, 1\}$ follow a known distribution 
\begin{equation}
     \prob(\Dec=1 |\X) = \sigma\left(0.5 - \frac{\X_1 - 30}{50}\right) \quad \text{where }\sigma(x) = \frac{1}{1 + \exp{(-x)}}.
     \label{eq:synthetic_propensity}
\end{equation} 
The health loss probability for both treatment options follows
\begin{equation}
\begin{aligned}
    \prob(\Loss=1 |\X, \Dec = 0) = 0.8 \quad \text{and} \quad
    \prob(\Loss=1 |\X, \Dec = 1) = 0.01 \cdot (\X_1 - 30).
\end{aligned}
\label{eq:syn_gen_y}
\end{equation}
We generate  $|\setdata| = \npolicy+\nrisk=1000+1000$ samples. The set $\setdata_{\npolicy}$ is used to learn $\policy(\X; \trtparam)$ in (\ref{eq:minimization_policy}), where $\trtparam \in (0, 0.5]$  evaluated at 200 equally spaced points. The decision tree is restricted to a single split, where the covariates are discretized into 200 bins. The other $\setdata_{\nrisk}$ is used to form $\trtparam_{\nrisk}$ in (\ref{eq:choose_trt}) using a miscoverage rate of $\miscoveragerate = 10\%$ and specified tolerance $\trt$. The resulting policy is $\policy(\X; \trtparam_{\nrisk})$.

We evaluate the population and treatment risks for 1000 different policies learned from different draws of $\setdata$. Initially, we assume no miscalibration of the treatment assignment odds $(\Gamma=1)$. In \cref{fig:motivating-example-criteria}, it is evident that $\trt$ effectively control the treatment risk and that at least $1-\miscoveragerate$ of the runs are below the tolerance $\tau$. When comparing with \cref{fig:motivating-example-obj}, the trade-off between population and treatment risks is clearly visible. Additional results for $\Gamma=2$ are provided in \cref{sec:app_synthetic}. For illustration purposes, learned certified policies $\policy(\X; \trtparam_n)$ for three different risk tolerances $\trt$ are visualized in \cref{fig:fft_synthetic}. As expected, a lower tolerance $\trt$ results in treatments being assigned to individuals with lower $X_1$. As $\trt$ increases, the proportion of treated individuals also increases.

We now add an unmeasured confounding variable $\Unobserved$, uniformly distributed $\Unobserved \sim \mathcal{U}(0,1)$. The treatment assignment probability is set to
\begin{equation*}
\begin{aligned}
    \prob(\Dec=1 |\X, \Unobserved) = \ind{\Unobserved < 0.5} \cdot \frac{\sigma_{\text{nom}}(\X)}{2 - \sigma_{\text{nom}}(\X)} 
     + \ind{\Unobserved \geq 0.5} \cdot \frac{\sigma_{\text{nom}}(\X)}{0.5 + 0.5\sigma_{\text{nom}}(\X)},
\end{aligned}
\end{equation*} 
where $\sigma_{\text{nom}}(\X)$ is the nominal probability model in (\ref{eq:synthetic_propensity}). The loss for the treated is drawn from
\begin{equation}
\begin{aligned}
    \prob(\Loss=1 |\Dec = 1, \X, \Unobserved) = \ind{\Unobserved < 0.5} \cdot 0.02 \cdot (\X_1 - 30)
    + \ind{\Unobserved \geq 0.5} \cdot 0.002 \cdot (\X_1 - 30).
\end{aligned}
\label{eq:syn_loss_confounding}
\end{equation}

\begin{figure*}
\begin{subfigure}{0.33\linewidth}
    \centering
    \begin{forest}
        [$X_1 < 40.7$ years, root
            [{$A = 1$}, treatnode, EL={yes}]
            [{$A = 0$}, notreatnode, EL={no}]
        ]
    \end{forest}
    \vspace{3pt}
    \caption{$\trt = 10\%$}
\end{subfigure}
\begin{subfigure}{0.33\linewidth}
    \centering
    \begin{forest}
        [$X_1 < 55.7$ years, root
            [{$A = 1$}, treatnode, EL={yes}]
            [{$A = 0$}, notreatnode, EL={no}]
        ]
    \end{forest}
    \vspace{3pt}
    \caption{$\trt = 20\%$}
\end{subfigure}
\begin{subfigure}{0.33\linewidth}
    \centering
    \begin{forest}
        [$X_1 < 73.4$ years, root
            [{$A = 1$}, treatnode, EL={yes}]
            [{$A = 0$}, notreatnode, EL={no}]
        ]
    \end{forest}
    \vspace{3pt}
    \caption{$\trt = 30\%$}
\end{subfigure}
\caption{Treatment allocation policies $\policy(\X; \trtparam_{\nrisk})$ learned from a synthetic dataset under different risk tolerance levels $\trt$, assuming $\Gamma = 1$ in \cref{eq:minimization_policy}.}

\label{fig:fft_synthetic}
\end{figure*}

We assume that the learning method uses $\probhat(\Dec|\X) = \sigma_{\text{nom}}(\X)$ as a nominal model, which corresponds to an odds miscalibration degree of $\Gamma = 2$ in (\ref{eq:Gamma_bound_a}). We repeat the evaluation of the learned policies with $1 \leq \Gamma \leq 2$. In \cref{fig:synthetic-vio-criteria}, it is evident that the treatment risk cannot be controlled when assuming no miscalibration ($\Gamma = 1$). In contrast, if the odds ratios could be off by a factor of up to $\Gamma = 2$, the treatment risk is indeed controlled. In \cref{fig:synthetic-vio-obj}, the trade-off between minimizing the probability of nonrecovery and minimizing the treatment risk is still visible, and using $\Gamma = 2$ results in a higher probability of nonrecovery since the treatment risk constraint is tighter. Additional results in the case of randomized trial data are included in \cref{sec:app_synthetic}.

\begin{figure*}
\begin{subfigure}{0.47\textwidth}
    \centering
    \resizebox{\linewidth}{!}{ 
        \begin{tikzpicture}
    \begin{axis}[
        xlabel={$\trt$},
        ylabel={$\mathbb{P}_{\pi}(L = 1|A=1, S=0)$},
        yticklabel=\pgfmathparse{\tick*100}\pgfmathprintnumber{\pgfmathresult}\%,
        xticklabel=\pgfmathparse{\tick*100}\pgfmathprintnumber{\pgfmathresult}\%,
        xmin=0, xmax=0.4,
        ymin=0, ymax=0.4,
        legend pos=north west
    ]

    \addplot[color=RoyalBlue, mark = o, line width=1pt] table[x=beta, y=mean constr, col sep=comma] {tikz/data/syn_gamma1.csv}; 
    \addlegendentry{$\Gamma = 1$}

    \addplot[color=Peach, mark = star, line width=1pt] table[x=beta, y=mean constr, col sep=comma] {tikz/data/syn_gamma2.csv}; 
    \addlegendentry{$\Gamma = 2$}

    \addplot[name path=upper, draw=none] table[x=beta, y=constr_q90, col sep=comma] {tikz/data/syn_gamma1.csv};
    \addplot[name path=lower, draw=none] table[x=beta, y=constr_q10, col sep=comma] {tikz/data/syn_gamma1.csv};    
    \addplot[RoyalBlue, opacity=0.3] fill between[of=upper and lower];  

    \addplot[name path=upper, draw=none] table[x=beta, y=constr_q90, col sep=comma] {tikz/data/syn_gamma2.csv};
    \addplot[name path=lower, draw=none] table[x=beta, y=constr_q10, col sep=comma] {tikz/data/syn_gamma2.csv};    
    \addplot[Peach, opacity=0.3] fill between[of=upper and lower];  

    \addplot [
        domain=0.0:0.4, 
        samples=10, 
        color=black,
        line width=1pt,
        dashed
        ]
        {x};
    \end{axis}
\end{tikzpicture}
    }
    \caption{}
    \label{fig:synthetic-vio-criteria}
\end{subfigure}
\hfill  
\begin{subfigure}{0.47\textwidth}
    \centering
    \resizebox{\linewidth}{!}{ 
        \begin{tikzpicture}
    \begin{axis}[
        xlabel={$\trt$},
        ylabel={$\mathbb{P}_{\pi}(L = 1|S=0)$},
        yticklabel=\pgfmathparse{\tick*100}\pgfmathprintnumber{\pgfmathresult}\%,
        xticklabel=\pgfmathparse{\tick*100}\pgfmathprintnumber{\pgfmathresult}\%,
        xmin=0, xmax=0.4,
        ymin=0, ymax=1.0,
        legend pos=north east
    ]

    \addplot[color=RoyalBlue, mark = o, line width=1pt] table[x=beta, y=mean obj, col sep=comma] {tikz/data/syn_gamma1.csv}; 
    

    \addplot[color=Peach, mark = star, line width=1pt] table[x=beta, y=mean obj, col sep=comma] {tikz/data/syn_gamma2.csv}; 
    

    \addplot[name path=upper, draw=none] table[x=beta, y=obj_q90, col sep=comma] {tikz/data/syn_gamma1.csv};
    \addplot[name path=lower, draw=none] table[x=beta, y=obj_q10, col sep=comma] {tikz/data/syn_gamma1.csv};    
    \addplot[RoyalBlue, opacity=0.3] fill between[of=upper and lower];  

    \addplot[name path=upper, draw=none] table[x=beta, y=obj_q90, col sep=comma] {tikz/data/syn_gamma2.csv};
    \addplot[name path=lower, draw=none] table[x=beta, y=obj_q10, col sep=comma] {tikz/data/syn_gamma2.csv};    
    \addplot[Peach, opacity=0.3] fill between[of=upper and lower];  

    \end{axis}
\end{tikzpicture}
    }
    \caption{}
    \label{fig:synthetic-vio-obj}
\end{subfigure}
\caption{The treatment risk tolerance $\trt$ versus treatment risk, $\Tr(\policy)$, and the population risk, $\Popr(\policy)$, under $\policy$. The learned policies are certified to control the risk with probability of at least $1-\miscoveragerate = 90\%$, up to a specified degree of miscalibration ($\Gamma = {1 ,2}$). The shaded blue region represents the range between the 10th and 90th percentiles of policies learned using 1,000 different datasets. (a) For an assumed degree of odds miscalibration of $\Gamma = 2$, the treatment risk is controlled as expected. Assuming no miscalibration, $\Gamma=1$, is less credible and we also see there is no risk control in this case. (b) The corresponding population risks.}
\label{fig:synthetic-vio}
\end{figure*}
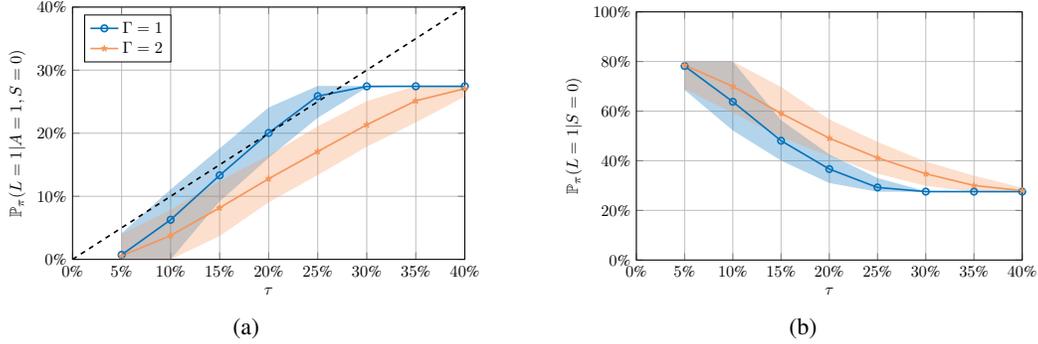

\subsection{STAR Data}
\label{sec:star}
For illustration, we also test our method on real-world data from the Tennessee Student/Teacher Achievement Ratio (STAR) study \citep{DVN/SIWH9F_2008, krueger1999experimental}, a randomized controlled trial on class size conducted between 1985 and 1989. In this study, preschool through third-grade students and their teachers were randomly assigned to one of three class types: small (13–17 students per teacher), regular (22–25 students per teacher) and regular with an aide (22–25 students per teacher plus a full-time assistant). However, in our analysis, we focus only on the first two groups. Each student is characterized by $\X$, which covers 11 covariates such as birth month, gender, teacher career, and experience. Additional details can be found in \cref{sec:app_star}.

Following \citet{kallus2018removing}, we define first-grade class type as the treatment, as many students were not enrolled in the study during preschool ($\Dec =0$ is `regular class size' and $\Dec =1$ is `small class'). The outcome variable $\Loss$ is the achievement test score at the end of first grade, calculated as the sum of the standardized math, reading and listening scores. This score is then binarized, with $\Loss = 0$ indicating a sum above the median and $\Loss = 1$ indicating a sum at or below the median. Students who were not part of the STAR study in first grade, had missing outcome data or were assigned to the regular class with an aide were excluded.

The final dataset consists of 4218 students. A 50 percent of the samples is randomly split for policy construction in (\ref{eq:minimization_policy}), where $\trtparam$ is evaluated at 100 equally spaced points in the range $(0, 0.8]$. The decision tree is constrained to a maximum of three splits, and similarly to the synthetic case, continuous covariates are discretized into 200 bins. The next 25 percent of samples are used to form (\ref{eq:choose_trt}) for the certified policy $\policy(\X; \trtparam_{\nrisk})$. The remaining 25 percent of samples is used for evaluation. For the STAR data, a full evaluation, as conducted in the synthetic case, is not feasible. Instead, we use 100 random splits to evaluate the policies. Assuming no miscalibration ($\Gamma$ = 1), the resulting policies show valid coverage for all values of $\trt$ (\cref{fig:star-criteria}). In this scenario, the trade-off between controlling the treatment risk and minimizing the population risk still remains evident (\cref{fig:star-obj}). In \cref{fig:fft_star}, the learned policies for different tolerances $\trt$ are visualized. A low $\trt$ results in no assigned treatment, while a high $\trt$ leads to a treat-all policy. For $\trt=0.35$ and $\trt=0.40$ policies of depth two are identified and a portion of the population receives treatment.

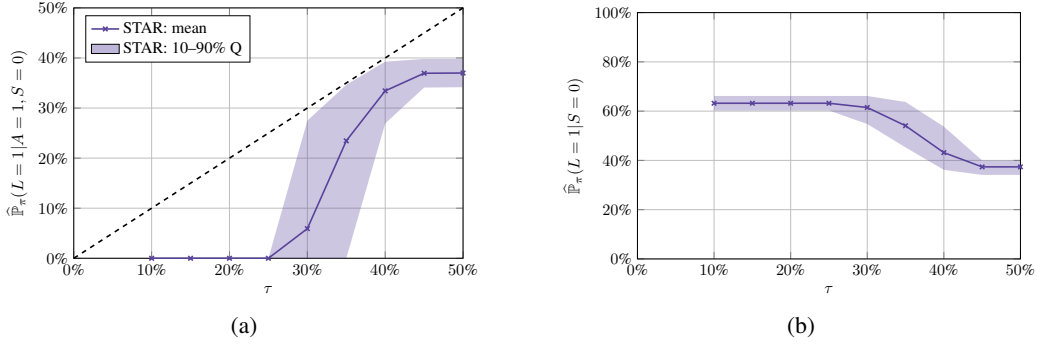
\begin{figure*}
\begin{subfigure}{0.47\textwidth}
    \centering
    \resizebox{\linewidth}{!}{ 
\begin{tikzpicture}
    \begin{axis}[
        xlabel={$\trt$},
        ylabel={$\widehat{\mathbb{P}}_{\pi}(L=1|A=1, S=0)$},
        yticklabel=\pgfmathparse{\tick*100}\pgfmathprintnumber{\pgfmathresult}\%,
        xticklabel=\pgfmathparse{\tick*100}\pgfmathprintnumber{\pgfmathresult}\%,
        xmin=0, xmax=0.5,
        ymin=0, ymax=0.5,
        legend pos=north west
    ]

    \addplot[color=Violet,  mark = x, line width=1pt] table[x=beta, y=mean constr, col sep=comma] {tikz/data/star.csv}; 
    \addlegendentry{STAR: mean}
    \addlegendimage{area legend, fill=Violet!30}
    \addlegendentry{STAR: 10–90\% Q}

    \addplot[name path=upper, draw=none] table[x=beta, y=constr_q90, col sep=comma] {tikz/data/star.csv};
    \addplot[name path=lower, draw=none] table[x=beta, y=constr_q10, col sep=comma] {tikz/data/star.csv};    
    \addplot[Violet, opacity=0.3] fill between[of=upper and lower];  

    \addplot [
        domain=0.0:0.5, 
        samples=10, 
        color=black,
        line width=1pt,
        dashed
        ]
        {x};

    \end{axis}
\end{tikzpicture}
    }
    \caption{}
    \label{fig:star-criteria}
\end{subfigure}
\hfill
\begin{subfigure}{0.47\textwidth}
    \centering
    \resizebox{\linewidth}{!}{ 
\begin{tikzpicture}
    \begin{axis}[
        xlabel={$\trt$},
        ylabel={$\widehat{\mathbb{P}}_{\pi}(L = 1|S=0)$},
        yticklabel=\pgfmathparse{\tick*100}\pgfmathprintnumber{\pgfmathresult}\%,
        xticklabel=\pgfmathparse{\tick*100}\pgfmathprintnumber{\pgfmathresult}\%,
        xmin=0, xmax=0.5,
        ymin=0, ymax=1,
        legend pos=north east
    ]

    \addplot[color=Violet,  mark = x, line width=1pt] table[x=beta, y=mean obj, col sep=comma] {tikz/data/star.csv}; 

    \addplot[name path=upper, draw=none] table[x=beta, y=obj_q90, col sep=comma] {tikz/data/star.csv};
    \addplot[name path=lower, draw=none] table[x=beta, y=obj_q10, col sep=comma] {tikz/data/star.csv};    
    \addplot[Violet, opacity=0.3] fill between[of=upper and lower];  

    \end{axis}
\end{tikzpicture}
    }
    \caption{}
    \label{fig:star-obj}
\end{subfigure}
\caption{Treatment and population risks of policies learned from STAR dataset. The shaded regions are obtained by randomized sample splitting.}
\label{fig:star}
\end{figure*}

\begin{figure*}
\begin{subfigure}{0.15\linewidth}
    \centering
    \begin{forest}
        [{Regular}, notreatnode]
    \end{forest}
    \vspace{20pt}
    \caption{$\trt \leq 30\%$}
\end{subfigure}
\hfill
\begin{subfigure}{0.3\linewidth}
    \centering
        \begin{forest}
            [{Free lunch}, root
                [{Teaching years \\ $< 6$}, root, EL={yes} 
                    [{Small}, treatnode, EL={yes}]
                    [{Regular}, notreatnode, EL={no}]]
                [{Small}, treatnode, EL={no}]
            ]
        \end{forest}
    \vspace{3pt}
    \caption{$\trt = 35\%$}
\end{subfigure}
\hfill
\begin{subfigure}{0.3\linewidth}
    \centering
        \begin{forest}
            [{Free lunch}, root
                [{Teaching years \\ $< 12$}, root, EL={yes} 
                    [{Small}, treatnode, EL={yes}]
                    [{Regular}, notreatnode, EL={no}]]
                [{Small}, treatnode, EL={no}]
            ]
        \end{forest}
    \vspace{3pt}
    \caption{$\trt = 40 \%$}
\end{subfigure}
\hfill
\begin{subfigure}{0.15\linewidth}
    \centering
        \begin{forest}
            [{Small}, treatnode]
        \end{forest}
    \vspace{20pt}
    \caption{$\trt \geq 45 \%$}
\end{subfigure}
\caption{Examples of learned treatment policies $\policy(\X; \trtparam_{\nrisk})$ using the STAR dataset, with student covariates $\X$ in grey, for different risk tolerances $\trt$.}
\label{fig:fft_star}
\end{figure*}
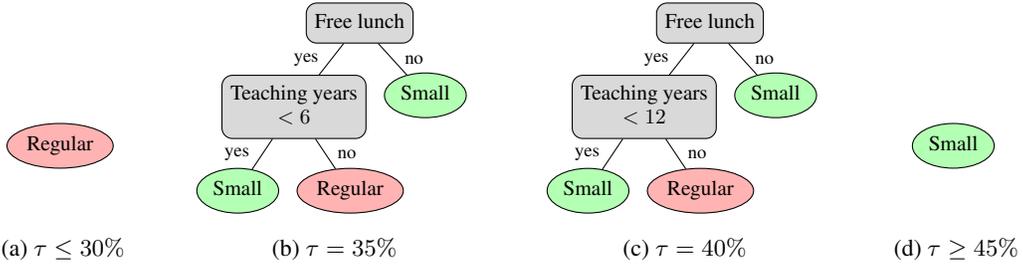

\section{Discussion}
\label{sec:discussion}
We introduced a learning method that advances trustworthiness in data-driven decision-making that seeks beneficial treatment allocations under explicit control of the treatment risk. The proposed approach is a certifiable method that ensures risk control in finite samples, even in partially identified settings. This makes it a valuable tool for high-stakes applications such as personalized medicine, economic policy, and safety-critical systems.

The proposed method provides a principled way to control treatment risk on average across the entire population. However, in settings where certain subgroups, defined by sensitive or high-stakes features such as age or socioeconomic status, exhibit significantly different risk profiles, aggregate control may be insufficient. In such cases, it may be necessary to stratify the population into appropriate subpopulations and apply risk control separately within each group to ensure fairness.


\begin{ack}
This work was partially supported by the Wallenberg AI, Autonomous Systems and
Software Program (WASP) funded by the Knut and Alice Wallenberg Foundation, and the Swedish Research Council under contract 2021-05022.
\end{ack}

\bibliography{ref}

\begin{thebibliography}{51}
\providecommand{\natexlab}[1]{#1}
\providecommand{\url}[1]{\texttt{#1}}
\expandafter\ifx\csname urlstyle\endcsname\relax
  \providecommand{\doi}[1]{doi: #1}\else
  \providecommand{\doi}{doi: \begingroup \urlstyle{rm}\Url}\fi

\bibitem[Achilles et~al.(2008)Achilles, Bain, Bellott, Boyd-Zaharias, Finn, Folger, Johnston, and Word]{DVN/SIWH9F_2008}
C.M. Achilles, Helen~Pate Bain, Fred Bellott, Jayne Boyd-Zaharias, Jeremy Finn, John Folger, John Johnston, and Elizabeth Word.
\newblock {Tennessee's Student Teacher Achievement Ratio (STAR) project}, 2008.
\newblock URL \url{https://doi.org/10.7910/DVN/SIWH9F}.

\bibitem[Adjaho and Christensen(2023)]{adjaho2023externally}
Christopher Adjaho and Timothy Christensen.
\newblock Externally valid policy choice.
\newblock \emph{arXiv preprint arXiv:2205.05561v3}, 1, 2023.

\bibitem[Angelopoulos et~al.(2022)Angelopoulos, Bates, Fisch, Lei, and Schuster]{angelopoulos2022conformal}
Anastasios~N Angelopoulos, Stephen Bates, Adam Fisch, Lihua Lei, and Tal Schuster.
\newblock Conformal risk control.
\newblock \emph{arXiv preprint arXiv:2208.02814}, 2022.

\bibitem[Athey and Imbens(2016)]{athey2016recursive}
Susan Athey and Guido Imbens.
\newblock Recursive partitioning for heterogeneous causal effects.
\newblock \emph{Proceedings of the National Academy of Sciences}, 113\penalty0 (27):\penalty0 7353--7360, 2016.

\bibitem[Athey and Wager(2021)]{athey2021policy}
Susan Athey and Stefan Wager.
\newblock Policy learning with observational data.
\newblock \emph{Econometrica}, 89\penalty0 (1):\penalty0 133--161, 2021.

\bibitem[Bates et~al.(2021)Bates, Angelopoulos, Lei, Malik, and Jordan]{bates2021distribution}
Stephen Bates, Anastasios Angelopoulos, Lihua Lei, Jitendra Malik, and Michael Jordan.
\newblock Distribution-free, risk-controlling prediction sets.
\newblock \emph{Journal of the ACM (JACM)}, 68\penalty0 (6):\penalty0 1--34, 2021.

\bibitem[Ben-Michael et~al.(2025)Ben-Michael, Greiner, Imai, and Jiang]{ben2025safe}
Eli Ben-Michael, D~James Greiner, Kosuke Imai, and Zhichao Jiang.
\newblock Safe policy learning through extrapolation: Application to pre-trial risk assessment.
\newblock \emph{Journal of the American Statistical Association}, pages 1--23, 2025.

\bibitem[Caruana et~al.(2015)Caruana, Lou, Gehrke, Koch, Sturm, and Elhadad]{caruana2015intelligible}
Rich Caruana, Yin Lou, Johannes Gehrke, Paul Koch, Marc Sturm, and Noemie Elhadad.
\newblock Intelligible models for healthcare: Predicting pneumonia risk and hospital 30-day readmission.
\newblock In \emph{Proceedings of the 21th ACM SIGKDD international conference on knowledge discovery and data mining}, pages 1721--1730, 2015.

\bibitem[Christensen et~al.(2023)Christensen, Moon, and Schorfheide]{christensen2023optimal}
Timothy Christensen, Hyungsik~Roger Moon, and Frank Schorfheide.
\newblock Optimal discrete decisions when payoffs are partially identified.
\newblock \emph{arXiv preprint arXiv:2204.11748v2}, 2023.

\bibitem[Cui(2021)]{cui2021individualized}
Yifan Cui.
\newblock Individualized decision-making under partial identification: Three perspectives, two optimality results, and one paradox.
\newblock \emph{Harvard Data Science Review}, 3\penalty0 (3), 2021.

\bibitem[Doubleday et~al.(2022)Doubleday, Zhou, Zhou, and Fu]{doubleday2022risk}
Kevin Doubleday, Jin Zhou, Hua Zhou, and Haoda Fu.
\newblock Risk controlled decision trees and random forests for precision medicine.
\newblock \emph{Statistics in medicine}, 41\penalty0 (4):\penalty0 719--735, 2022.

\bibitem[Dud{\'\i}k et~al.(2011)Dud{\'\i}k, Langford, and Li]{dudik2011doubly}
Miroslav Dud{\'\i}k, John Langford, and Lihong Li.
\newblock Doubly robust policy evaluation and learning.
\newblock In \emph{Proceedings of the 28th International Conference on International Conference on Machine Learning}, pages 1097--1104, 2011.

\bibitem[Ek and Zachariah(2024)]{ek2024externally}
Sofia Ek and Dave Zachariah.
\newblock Externally valid policy evaluation from randomized trials using additional observational data.
\newblock In \emph{The Thirty-eighth Annual Conference on Neural Information Processing Systems}, 2024.
\newblock URL \url{https://openreview.net/forum?id=2pgc5xDJ1b}.

\bibitem[Gigerenzer et~al.(2000)Gigerenzer, Todd, ABC Research~Group, et~al.]{gigerenzer2000simple}
Gerd Gigerenzer, Peter~M Todd, the ABC Research~Group, et~al.
\newblock \emph{Simple heuristics that make us smart}.
\newblock Oxford University Press, 2000.

\bibitem[Group et~al.(1997)]{international1997international}
International Stroke Trial~Collaborative Group et~al.
\newblock The international stroke trial (ist): a randomised trial of aspirin, subcutaneous heparin, both, or neither among 19 435 patients with acute ischaemic stroke.
\newblock \emph{The Lancet}, 349\penalty0 (9065):\penalty0 1569--1581, 1997.

\bibitem[Hoogland et~al.(2021)Hoogland, IntHout, Belias, Rovers, Riley, E.~Harrell~Jr, Moons, Debray, and Reitsma]{hoogland2021tutorial}
Jeroen Hoogland, Joanna IntHout, Michail Belias, Maroeska~M Rovers, Richard~D Riley, Frank E.~Harrell~Jr, Karel~GM Moons, Thomas~PA Debray, and Johannes~B Reitsma.
\newblock A tutorial on individualized treatment effect prediction from randomized trials with a binary endpoint.
\newblock \emph{Statistics in medicine}, 40\penalty0 (26):\penalty0 5961--5981, 2021.

\bibitem[Huang et~al.(2021)Huang, Leqi, Lipton, and Azizzadenesheli]{huang2021off}
Audrey Huang, Liu Leqi, Zachary Lipton, and Kamyar Azizzadenesheli.
\newblock Off-policy risk assessment in contextual bandits.
\newblock \emph{Advances in Neural Information Processing Systems}, 34:\penalty0 23714--23726, 2021.

\bibitem[Huang(2024)]{huang2024sensitivity}
Melody~Y Huang.
\newblock Sensitivity analysis for the generalization of experimental results.
\newblock \emph{Journal of the Royal Statistical Society Series A: Statistics in Society}, page qnae012, 03 2024.

\bibitem[Ichino et~al.(2008)Ichino, Mealli, and Nannicini]{ichino2008temporary}
Andrea Ichino, Fabrizia Mealli, and Tommaso Nannicini.
\newblock From temporary help jobs to permanent employment: what can we learn from matching estimators and their sensitivity?
\newblock \emph{Journal of applied econometrics}, 23\penalty0 (3):\penalty0 305--327, 2008.

\bibitem[Imbens and Rubin(2015)]{imbens2015causal}
Guido~W Imbens and Donald~B Rubin.
\newblock \emph{Causal inference in statistics, social, and biomedical sciences}.
\newblock Cambridge university press, 2015.

\bibitem[Kallus(2018)]{kallus2018balanced}
Nathan Kallus.
\newblock Balanced policy evaluation and learning.
\newblock \emph{Advances in neural information processing systems}, 31, 2018.

\bibitem[Kallus(2022)]{kallus2022s}
Nathan Kallus.
\newblock What's the harm? sharp bounds on the fraction negatively affected by treatment.
\newblock \emph{Advances in Neural Information Processing Systems}, 35:\penalty0 15996--16009, 2022.

\bibitem[Kallus and Zhou(2021)]{kallus2021minimax}
Nathan Kallus and Angela Zhou.
\newblock Minimax-optimal policy learning under unobserved confounding.
\newblock \emph{Management Science}, 67\penalty0 (5):\penalty0 2870--2890, 2021.

\bibitem[Kallus et~al.(2018)Kallus, Puli, and Shalit]{kallus2018removing}
Nathan Kallus, Aahlad~Manas Puli, and Uri Shalit.
\newblock Removing hidden confounding by experimental grounding.
\newblock \emph{Advances in neural information processing systems}, 31, 2018.

\bibitem[Katsikopoulos et~al.(2021)Katsikopoulos, Simsek, Buckmann, and Gigerenzer]{katsikopoulos2021classification}
Konstantinos~V Katsikopoulos, Ozgur Simsek, Marcus Buckmann, and Gerd Gigerenzer.
\newblock \emph{Classification in the wild: The science and art of transparent decision making}.
\newblock MIT Press, 2021.

\bibitem[Kitagawa and Tetenov(2018)]{kitagawa2018should}
Toru Kitagawa and Aleksey Tetenov.
\newblock Who should be treated? empirical welfare maximization methods for treatment choice.
\newblock \emph{Econometrica}, 86\penalty0 (2):\penalty0 591--616, 2018.

\bibitem[Krueger(1999)]{krueger1999experimental}
Alan~B Krueger.
\newblock Experimental estimates of education production functions.
\newblock \emph{The quarterly journal of economics}, 114\penalty0 (2):\penalty0 497--532, 1999.

\bibitem[Lei et~al.(2018)Lei, G’Sell, Rinaldo, Tibshirani, and Wasserman]{lei2018distribution}
Jing Lei, Max G’Sell, Alessandro Rinaldo, Ryan~J Tibshirani, and Larry Wasserman.
\newblock Distribution-free predictive inference for regression.
\newblock \emph{Journal of the American Statistical Association}, 113\penalty0 (523):\penalty0 1094--1111, 2018.

\bibitem[Li et~al.(2023)Li, Zheng, Cao, Geng, Liu, and Wu]{li2023trustworthy}
Haoxuan Li, Chunyuan Zheng, Yixiao Cao, Zhi Geng, Yue Liu, and Peng Wu.
\newblock Trustworthy policy learning under the counterfactual no-harm criterion.
\newblock In \emph{International Conference on Machine Learning}, pages 20575--20598. PMLR, 2023.

\bibitem[Manski(2003)]{manski2003identification}
Charles~F Manski.
\newblock Identification problems in the social sciences and everyday life.
\newblock \emph{Southern Economic Journal}, 70\penalty0 (1):\penalty0 11--21, 2003.

\bibitem[Manski(2004)]{manski2004statistical}
Charles~F Manski.
\newblock Statistical treatment rules for heterogeneous populations.
\newblock \emph{Econometrica}, 72\penalty0 (4):\penalty0 1221--1246, 2004.

\bibitem[Manski(2007)]{manski2007identification}
Charles~F Manski.
\newblock \emph{Identification for prediction and decision}.
\newblock Harvard University Press, 2007.

\bibitem[Nie and Wager(2021)]{nie2021quasi}
Xinkun Nie and Stefan Wager.
\newblock Quasi-oracle estimation of heterogeneous treatment effects.
\newblock \emph{Biometrika}, 108\penalty0 (2):\penalty0 299--319, 2021.

\bibitem[Peters et~al.(2017)Peters, Janzing, and Sch{\"o}lkopf]{peters2017elements}
Jonas Peters, Dominik Janzing, and Bernhard Sch{\"o}lkopf.
\newblock \emph{Elements of causal inference: foundations and learning algorithms}.
\newblock The MIT Press, 2017.

\bibitem[Qian and Murphy(2011)]{qian2011performance}
Min Qian and Susan~A Murphy.
\newblock Performance guarantees for individualized treatment rules.
\newblock \emph{Annals of statistics}, 39\penalty0 (2):\penalty0 1180, 2011.

\bibitem[Rudin(2019)]{rudin2019stop}
Cynthia Rudin.
\newblock Stop explaining black box machine learning models for high stakes decisions and use interpretable models instead.
\newblock \emph{Nature machine intelligence}, 1\penalty0 (5):\penalty0 206--215, 2019.

\bibitem[Sandercock et~al.(2011)Sandercock, Niewada, Cz{\l}onkowska, and Group]{sandercock2011international}
Peter~AG Sandercock, Maciej Niewada, Anna Cz{\l}onkowska, and International Stroke Trial~Collaborative Group.
\newblock The international stroke trial database.
\newblock \emph{Trials}, 12\penalty0 (1):\penalty0 101, 2011.

\bibitem[Sarvet and Stensrud(2023)]{sarvet2023perspective}
Aaron~L Sarvet and Mats~J Stensrud.
\newblock Perspective on ‘harm’in personalized medicine.
\newblock \emph{American Journal of Epidemiology}, page kwad162, 2023.

\bibitem[Smith(2005)]{smith2005origin}
Cedric~M Smith.
\newblock Origin and uses of primum non nocere—above all, do no harm!
\newblock \emph{The Journal of Clinical Pharmacology}, 45\penalty0 (4):\penalty0 371--377, 2005.

\bibitem[Swaminathan and Joachims(2015)]{swaminathan2015counterfactual}
Adith Swaminathan and Thorsten Joachims.
\newblock Counterfactual risk minimization: Learning from logged bandit feedback.
\newblock In \emph{International Conference on Machine Learning}, pages 814--823. PMLR, 2015.

\bibitem[Tan(2006)]{tan2006distributional}
Zhiqiang Tan.
\newblock A distributional approach for causal inference using propensity scores.
\newblock \emph{Journal of the American Statistical Association}, 101\penalty0 (476):\penalty0 1619--1637, 2006.

\bibitem[Vickers and Elkin(2006)]{vickers2006decision}
Andrew~J Vickers and Elena~B Elkin.
\newblock Decision curve analysis: a novel method for evaluating prediction models.
\newblock \emph{Medical Decision Making}, 26\penalty0 (6):\penalty0 565--574, 2006.

\bibitem[Vovk et~al.(2005)Vovk, Gammerman, and Shafer]{vovk2005algorithmic}
Vladimir Vovk, Alexander Gammerman, and Glenn Shafer.
\newblock \emph{Algorithmic learning in a random world}.
\newblock Springer Science \& Business Media, 2005.

\bibitem[Wager and Athey(2018)]{wager2018estimation}
Stefan Wager and Susan Athey.
\newblock Estimation and inference of heterogeneous treatment effects using random forests.
\newblock \emph{Journal of the American Statistical Association}, 113\penalty0 (523):\penalty0 1228--1242, 2018.

\bibitem[Wang et~al.(2018)Wang, Fu, and Zeng]{wang2018learning}
Yuanjia Wang, Haoda Fu, and Donglin Zeng.
\newblock Learning optimal personalized treatment rules in consideration of benefit and risk: with an application to treating type 2 diabetes patients with insulin therapies.
\newblock \emph{Journal of the American Statistical Association}, 113\penalty0 (521):\penalty0 1--13, 2018.

\bibitem[Wasserman(2013)]{wasserman2013all}
Larry Wasserman.
\newblock \emph{All of statistics: a concise course in statistical inference}.
\newblock Springer Science \& Business Media, 2013.

\bibitem[Westreich(2019)]{westreich2019epidemiology}
Daniel Westreich.
\newblock \emph{Epidemiology by Design: A Causal Approach to the Health Sciences}.
\newblock Oxford University Press, Incorporated, 2019.
\newblock ISBN 9780190665760.

\bibitem[Yata(2025)]{yata2025optimal}
Kohei Yata.
\newblock Optimal decision rules under partial identification.
\newblock \emph{arXiv preprint arXiv:2111.04926v4}, 2025.

\bibitem[Zhang et~al.(2012)Zhang, Tsiatis, Laber, and Davidian]{zhang2012robust}
Baqun Zhang, Anastasios~A Tsiatis, Eric~B Laber, and Marie Davidian.
\newblock A robust method for estimating optimal treatment regimes.
\newblock \emph{Biometrics}, 68\penalty0 (4):\penalty0 1010--1018, 2012.

\bibitem[Zhang et~al.(2015)Zhang, Laber, Tsiatis, and Davidian]{zhang2015using}
Yichi Zhang, Eric~B Laber, Anastasios Tsiatis, and Marie Davidian.
\newblock Using decision lists to construct interpretable and parsimonious treatment regimes.
\newblock \emph{Biometrics}, 71\penalty0 (4):\penalty0 895--904, 2015.

\bibitem[Zhao et~al.(2012)Zhao, Zeng, Rush, and Kosorok]{zhao2012estimating}
Yingqi Zhao, Donglin Zeng, A~John Rush, and Michael~R Kosorok.
\newblock Estimating individualized treatment rules using outcome weighted learning.
\newblock \emph{Journal of the American Statistical Association}, 107\penalty0 (499):\penalty0 1106--1118, 2012.

\end{thebibliography}
\bibliographystyle{plainnat}


\newpage
\appendix
\section*{Appendix}
In \cref{sec:experiments_app} we provide additional details on the numerical experiments discussed in Section~\ref{sec:experiments} and in \cref{sec:app_alt_method} we present a method that yields an alternative, weaker control on the treatment risk.

\section{Additional Details Experiments}
\label{sec:experiments_app}
All experiments were carried out on a laptop using only the CPU. The synthetic experiments required approximately two and a half to three hours for 1000 runs, while the real-data experiments took 10 to 15 minutes for 100 runs.

\subsection{Fast-and-Frugal Policy Learning}
\label{sec:fft}
Fast-and-frugal trees (FFTs) are rule-based decision trees designed for high-stakes, time-sensitive environments such as medical diagnostics and emergency response \citep{gigerenzer2000simple}. Their structure supports early decision-making, enabling some cases to be resolved without using all available information. For example, in medical testing, certain tests may be unnecessary for some patients, helping to optimize both time and resources \citep{katsikopoulos2021classification}.

We use FFTs to construct the policies in (\ref{eq:minimization_policy}), following the greedy approach of \citet{zhang2015using}. We restrict our rules to one variable in each condition and grid continuous covariates. The algorithm evaluates
\begin{equation}
    \Ehat_{\npolicy} \left[L \cdot \overline{\Weight}^{\Gamma}\right],
\end{equation}
for each potential split and selects the one that minimizes this value. The key distinction from \citet{zhang2015using} is that we additionally evaluate
\begin{equation}
    \Ehat_{\npolicy}\left[L \cdot \frac{\ind{A =1}}{\probpolicy(\Dec=1|\Sampling=0)} \cdot \overline{\Weight}^{\Gamma}\right] \leq \trtparam,
\end{equation}
at each greedy step and impose a constraint that only permits splits if this criterion is met. Before a split is performed, the current tree is stored. At each step, two new candidate trees are generated: one that continues to grow if the criterion is satisfied, and another that extends in the alternative direction. This process iterates until a stopping criterion is reached, either when the maximum depth is attained or when further splits fail to improve the objective function. Once all trees have been constructed, the final model is selected as the tree that minimizes the objective function.

\subsection{Benchmarking Degree of Miscalibration}
\label{sec:app_benchmarking}

We now demonstrate how a credible range for $\Gamma$ can be benchmarked, using an approach from \citet{huang2021off} and \citet{ek2024externally}. To this end, we estimate the propensity score via logistic regression for the STAR dataset for illustration. As a first step, we assess whether this model offers sufficient flexibility for the task. The nominal assignment odds in (\ref{eq:Gamma_bound_a}) are discretized into five bins. Within each bin, the unknown assignment odds are estimated by computing the empirical ratio of the samples for which $\Dec = 1$ and $\Dec = 0$, respectively. If the model is sufficiently flexible, the estimated assignment odds should approximate the nominal odds within each bin. This is observed in \cref{fig:calibration_curve}.

We continue with a benchmarking approach designed to account for the potential influence of unobserved individual factors $\Unobserved$, by treating observed covariates in $\X$ as unmeasured. Let the $k$th covariate $\X_k$ correspond to an omitted factor, while $\X_{-k}$ denotes all the remaining observed covariates and define
\begin{equation}
    \oddshat(\X_{-k}, \X_k) = \frac{1 - \widehat{\prob}(\Dec | \X)}{\widehat{\prob}(\Dec | \X)}
    , \quad  
    \oddshat(\X_{-k}) = \frac{1 - \widehat{\prob}(\Dec |  \X_{-k})}{\widehat{\prob}(\Dec | \X_{-k})}.
\end{equation}

The ratio $\oddshat(\X_{-k}, \X_k) / \oddshat(\X_{-k})$ is then used to benchmark odds ratio in (\ref{eq:Gamma_bound_a}). \Cref{fig:calibration_missing} illustrates this for the two most influential observed covariates. If the unobserved individual factor $\Unobserved$ has an effect on the propensity score that is no greater than than these, then a credible range for $\Gamma$ is between 1.5 and 1.7.

\begin{figure*}
\begin{subfigure}{0.47\textwidth}
    \centering
    \resizebox{\linewidth}{!}{ 
        \begin{tikzpicture}
    \begin{axis}[
        xlabel={Nominal odds},
        ylabel={Estimated odds},
        xmin=0, xmax=1.5,
        ymin=0, ymax=1.5,
        legend pos=north west
    ]

    \addplot[color=RoyalBlue, mark = o, line width=1pt] table[x=odds_pred, y=odds_true, col sep=comma] {tikz/data/calibration_curve.csv}; 
    \addlegendentry{Logistic}

    \addplot [
        domain=0.0:1.5, 
        samples=10, 
        color=black,
        line width=1pt,
        dashed
        ]
        {x}; 
    \addlegendentry{$\Gamma = 1$}

    \addplot [
        domain=0.0:1.5, 
        samples=10, 
        color=gray,
        line width=1pt,
        dashed
        ]
        {1.5 * x};
    \addlegendentry{$\Gamma = 1.5$}

    \addplot [
        domain=0.0:1.5, 
        samples=10, 
        color=gray,
        line width=1pt,
        dashed
        ]
        {1 / 1.5 * x};
    \end{axis};
    
\end{tikzpicture}
    }
    \caption{}
    \label{fig:calibration_curve}
\end{subfigure}  
\hfill
\begin{subfigure}{0.47\textwidth}
    \centering
    \resizebox{\linewidth}{!}{ 
        \begin{tikzpicture}
    \begin{axis}[
        xlabel={$\oddshat(\X_{-k}, \X_k)/\oddshat(\X_{-k})$},
        ylabel={},
        yticklabel=\pgfmathparse{\tick*100}\pgfmathprintnumber{\pgfmathresult}\%,
        xmin=0.5, xmax=1.8,
        ymin=0, ymax=1,
        legend pos=north west
    ]

    \addplot[color=RoyalBlue, line width=1pt] table[x=g1tcareer, y=ecdf_y, col sep=comma] {tikz/data/missing_odds.csv}; 
    \addlegendentry{g1tcareer}

    \addplot[color=Peach, line width=1pt] table[x=g1tyears, y=ecdf_y, col sep=comma] {tikz/data/missing_odds.csv}; 
    \addlegendentry{g1tyears}

    \addplot [
        domain=0.0:1.5, 
        samples=10, 
        color=black,
        line width=1pt,
        dashed
        ]
        ({1.0}, x); 
    \addlegendentry{$\Gamma = 1$}

    \addplot [
        domain=0.0:1.5, 
        samples=10, 
        color=gray,
        line width=1pt,
        dashed
        ]
        ({1.5}, x); 
    \addlegendentry{$\Gamma = 1.5$}

    \addplot [
        domain=0.0:1.5, 
        samples=10, 
        color=gray,
        line width=1pt,
        dashed
        ]
        ({1/1.5}, x); 
    \end{axis};
    
\end{tikzpicture}
    }
    \caption{}
    \label{fig:calibration_missing}
\end{subfigure}
\caption{Benchmarking $\Gamma$. (a) Reliability diagram of the observed odds against the average predicted nominal odds. (b) Benchmarking $\Gamma$ using omitted covariates.}
\end{figure*}
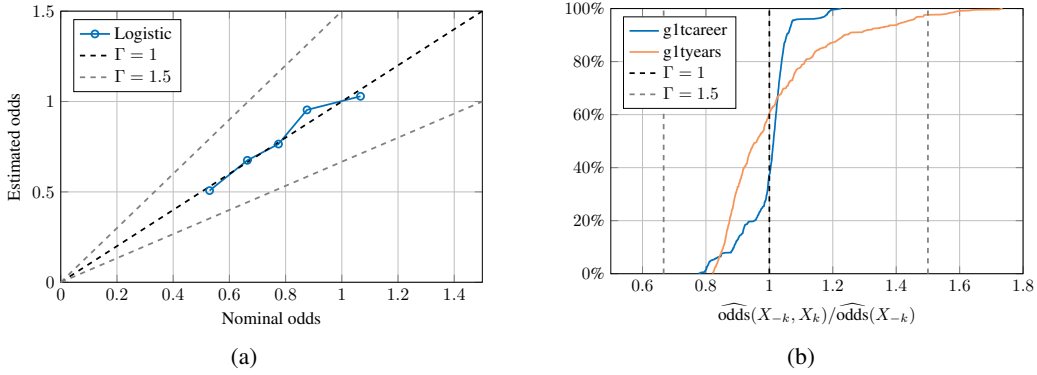

\subsection{Synthetic Data}
\label{sec:app_synthetic}
For completeness, we add $\Gamma=2$ to the experiments in \cref{fig:motivating-example}. Here, the treatment assignments are known and $\Gamma = 1$ is valid. Using $\Gamma = 2$ makes the algorithm more conservative, but, more importantly, the guarantee is valid; see \cref{fig:app_synthetic-gamma}. 

\begin{figure*}
\begin{subfigure}{0.47\textwidth}
    \centering
    \resizebox{\linewidth}{!}{ 
        \begin{tikzpicture}
    \begin{axis}[
        xlabel={$\trt$},
        ylabel={$\mathbb{P}_{\pi}(L = 1|A=1, S=0)$},
        yticklabel=\pgfmathparse{\tick*100}\pgfmathprintnumber{\pgfmathresult}\%,
        xticklabel=\pgfmathparse{\tick*100}\pgfmathprintnumber{\pgfmathresult}\%,
        xmin=0, xmax=0.4,
        ymin=0, ymax=0.4,
        legend pos=north west
    ]

    \addplot[color=RoyalBlue, mark = o, line width=1pt] table[x=beta, y=mean constr, col sep=comma] {tikz/data/syn_true1.csv}; 
    \addlegendentry{$\Gamma = 1$}

    \addplot[color=Peach, mark = star, line width=1pt] table[x=beta, y=mean constr, col sep=comma] {tikz/data/syn_true2.csv}; 
    \addlegendentry{$\Gamma = 2$}

    \addplot[name path=upper, draw=none] table[x=beta, y=constr_q90, col sep=comma] {tikz/data/syn_true1.csv};
    \addplot[name path=lower, draw=none] table[x=beta, y=constr_q10, col sep=comma] {tikz/data/syn_true1.csv};    
    \addplot[RoyalBlue, opacity=0.3] fill between[of=upper and lower];  

    \addplot[name path=upper, draw=none] table[x=beta, y=constr_q90, col sep=comma] {tikz/data/syn_true2.csv};
    \addplot[name path=lower, draw=none] table[x=beta, y=constr_q10, col sep=comma] {tikz/data/syn_true2.csv};    
    \addplot[Peach, opacity=0.3] fill between[of=upper and lower];  

    \addplot [
        domain=0.0:0.4, 
        samples=10, 
        color=black,
        line width=1pt,
        dashed
        ]
        {x};
    \end{axis}
\end{tikzpicture}
    }
    \caption{}
    \label{fig:app_synthetic-gamma-criteria}
\end{subfigure}  
\hfill
\begin{subfigure}{0.47\textwidth}
    \centering
    \resizebox{\linewidth}{!}{ 
        \begin{tikzpicture}
    \begin{axis}[
        xlabel={$\trt$},
        ylabel={$\mathbb{P}_{\pi}(L = 1|S=0)$},
        yticklabel=\pgfmathparse{\tick*100}\pgfmathprintnumber{\pgfmathresult}\%,
        xticklabel=\pgfmathparse{\tick*100}\pgfmathprintnumber{\pgfmathresult}\%,
        xmin=0, xmax=0.4,
        ymin=0, ymax=1.0,
        legend pos=north east
    ]

    \addplot[color=RoyalBlue, mark = o, line width=1pt] table[x=beta, y=mean obj, col sep=comma] {tikz/data/syn_true1.csv}; 
    \addlegendentry{$\Gamma = 1$}

    \addplot[color=Peach, mark = star, line width=1pt] table[x=beta, y=mean obj, col sep=comma] {tikz/data/syn_true2.csv}; 
    \addlegendentry{$\Gamma = 2$}

    \addplot[name path=upper, draw=none] table[x=beta, y=obj_q90, col sep=comma] {tikz/data/syn_true1.csv};
    \addplot[name path=lower, draw=none] table[x=beta, y=obj_q10, col sep=comma] {tikz/data/syn_true1.csv};    
    \addplot[RoyalBlue, opacity=0.3] fill between[of=upper and lower];  

    \addplot[name path=upper, draw=none] table[x=beta, y=obj_q90, col sep=comma] {tikz/data/syn_true2.csv};
    \addplot[name path=lower, draw=none] table[x=beta, y=obj_q10, col sep=comma] {tikz/data/syn_true2.csv};    
    \addplot[Peach, opacity=0.3] fill between[of=upper and lower];  

    \end{axis}
\end{tikzpicture}
    }
    \caption{}
    \label{fig:app_synthetic-gamma-obj}
\end{subfigure}
\caption{The treatment risk under policy $\policy$, and the population risk under the same policy for different values of the treatment risk tolerance $\trt$ and different confounding assumptions ($\Gamma = {1 ,2}$). (a) Treatment risk falls below any given $\trt$ with a probability of at least $90\%$. (b) The trade-off between population risk and treatment risk tolerance $\trt$.}
\label{fig:app_synthetic-gamma}
\end{figure*}
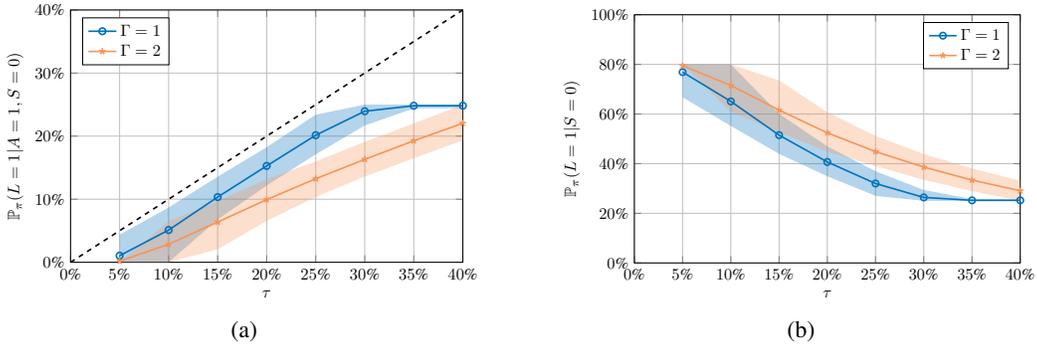

We also extend the synthetic experiments with experiments for \rct{}, similarly to the confounding case in the main paper. The covariates $\X$ are two-dimensional and $\prob(\X)$ is given by
\begin{equation}
    \X = \begin{bmatrix} \X_1 \\ \X_2 \end{bmatrix}|\Sampling=1 \sim \mathcal{U}(30,80)^2.
\end{equation} 
The assignment of treatment actions $\Dec \in \{0, 1\}$ is $\prob(\Dec=1) = 0.5$ and is known. We also have a confounding variable $\Unobserved$, uniformly distributed $\Unobserved \sim \mathcal{U}(0,1)$, that affects the sampling $S$ and the loss $\Loss$. The sampling distribution is given by
\begin{equation*}
\begin{aligned}
    \prob(\Sampling=1 |\X, \Unobserved) = \ind{\Unobserved < 0.5} \cdot \frac{\sigma_{\text{nom}}(\X)}{2 - \sigma_{\text{nom}}(\X)} 
     + \ind{\Unobserved \geq 0.5} \cdot \frac{\sigma_{\text{nom}}(\X)}{0.5 + 0.5\sigma_{\text{nom}}(\X)},
\end{aligned}
\end{equation*} 
where $\sigma_{\text{nom}}(\X)$ is the propensity score in  (\ref{eq:synthetic_propensity}). This corresponds to an error of $\Gamma = 2$ in (\ref{eq:Gamma_bound_s}). The loss for the treated is defined as in (\ref{eq:syn_loss_confounding}). The evaluation follows the procedure described in \cref{sec:exp_synthetic}, with the key difference that, instead of relying on a closed-form expression, each Monte Carlo run is now evaluated using 18,000 additional samples drawn from the test distribution.

The decision policy is learned for both $\Gamma = {1, 2}$. In \cref{fig:app_synthetic-rct-criteria}, we observe that for $\Gamma = 1$, the guarantee that the treatment risk remains below $\trt$ in at least $1-\miscoveragerate$ of the trials is violated. In contrast, $\Gamma = 2$ provides a valid model for the selection odds, ensuring that the treatment risk of the resulting policies stays below $\trt$ in the guaranteed proportion of trials. As shown in \cref{fig:app_synthetic-rct-obj}, the trade-off between minimizing the probability of nonrecovery and controlling treatment risk remains evident: using $\Gamma = 2$ improves treatment risk control but leads to a higher probability of nonrecovery.

\begin{figure*}
\begin{subfigure}{0.47\textwidth}
    \centering
    \resizebox{\linewidth}{!}{ 
        \begin{tikzpicture}
    \begin{axis}[
        xlabel={$\trt$},
        ylabel={$\mathbb{P}_{\pi}(L = 1|A=1, S=0)$},
        yticklabel=\pgfmathparse{\tick*100}\pgfmathprintnumber{\pgfmathresult}\%,
        xticklabel=\pgfmathparse{\tick*100}\pgfmathprintnumber{\pgfmathresult}\%,
        xmin=0, xmax=0.4,
        ymin=0, ymax=0.4,
        legend pos=north west
    ]

    \addplot[color=RoyalBlue, mark = o, line width=1pt] table[x=beta, y=mean constr, col sep=comma] {tikz/data/syn_rct_gamma1.csv}; 
    \addlegendentry{$\Gamma = 1$}

    \addplot[color=Peach, mark = star, line width=1pt] table[x=beta, y=mean constr, col sep=comma] {tikz/data/syn_rct_gamma2.csv}; 
    \addlegendentry{$\Gamma = 2$}

    \addplot[name path=upper, draw=none] table[x=beta, y=constr_q90, col sep=comma] {tikz/data/syn_rct_gamma1.csv};
    \addplot[name path=lower, draw=none] table[x=beta, y=constr_q10, col sep=comma] {tikz/data/syn_rct_gamma1.csv};    
    \addplot[RoyalBlue, opacity=0.3] fill between[of=upper and lower];  

    \addplot[name path=upper, draw=none] table[x=beta, y=constr_q90, col sep=comma] {tikz/data/syn_rct_gamma2.csv};
    \addplot[name path=lower, draw=none] table[x=beta, y=constr_q10, col sep=comma] {tikz/data/syn_rct_gamma2.csv};    
    \addplot[Peach, opacity=0.3] fill between[of=upper and lower];  

    \addplot [
        domain=0.0:0.4, 
        samples=10, 
        color=black,
        line width=1pt,
        dashed
        ]
        {x};
    \end{axis}
\end{tikzpicture}
    }
    \caption{}
    \label{fig:app_synthetic-rct-criteria}
\end{subfigure}  
\hfill
\begin{subfigure}{0.47\textwidth}
    \centering
    \resizebox{\linewidth}{!}{ 
        \begin{tikzpicture}
    \begin{axis}[
        xlabel={$\trt$},
        ylabel={$\mathbb{P}_{\pi}(L = 1|S=0)$},
        yticklabel=\pgfmathparse{\tick*100}\pgfmathprintnumber{\pgfmathresult}\%,
        xticklabel=\pgfmathparse{\tick*100}\pgfmathprintnumber{\pgfmathresult}\%,
        xmin=0, xmax=0.4,
        ymin=0, ymax=1.0,
        legend pos=north east
    ]

    \addplot[color=RoyalBlue, mark = o, line width=1pt] table[x=beta, y=mean obj, col sep=comma] {tikz/data/syn_rct_gamma1.csv}; 
    

    \addplot[color=Peach, mark = star, line width=1pt] table[x=beta, y=mean obj, col sep=comma] {tikz/data/syn_rct_gamma2.csv}; 
    

    \addplot[name path=upper, draw=none] table[x=beta, y=obj_q90, col sep=comma] {tikz/data/syn_rct_gamma1.csv};
    \addplot[name path=lower, draw=none] table[x=beta, y=obj_q10, col sep=comma] {tikz/data/syn_rct_gamma1.csv};    
    \addplot[RoyalBlue, opacity=0.3] fill between[of=upper and lower];  

    \addplot[name path=upper, draw=none] table[x=beta, y=obj_q90, col sep=comma] {tikz/data/syn_rct_gamma2.csv};
    \addplot[name path=lower, draw=none] table[x=beta, y=obj_q10, col sep=comma] {tikz/data/syn_rct_gamma2.csv};    
    \addplot[Peach, opacity=0.3] fill between[of=upper and lower];  

    \end{axis}
\end{tikzpicture}
    }
    \caption{}
    \label{fig:app_synthetic-rct-obj}
\end{subfigure}
\caption{The treatment risk under policy $\policy$, and the population risk under the same policy for different values of the treatment risk tolerance $\trt$ and different confounding assumptions ($\Gamma = {1 ,2}$). (a) For $\Gamma = 1$, the coverage guarantee that the treatment risk should be below $\tau$ with a high probability of at least $1 - \miscoveragerate$ does not hold across a range of $\trt$ values, while $\Gamma = 2$ provides a valid model for the degree of miscalibration, ensuring that the resulting policies remain valid across all $\trt$ values. (b) The trade-off between the probability of nonrecovery and different choices of $\trt$.}
\label{fig:app_synthetic-rct}
\end{figure*}
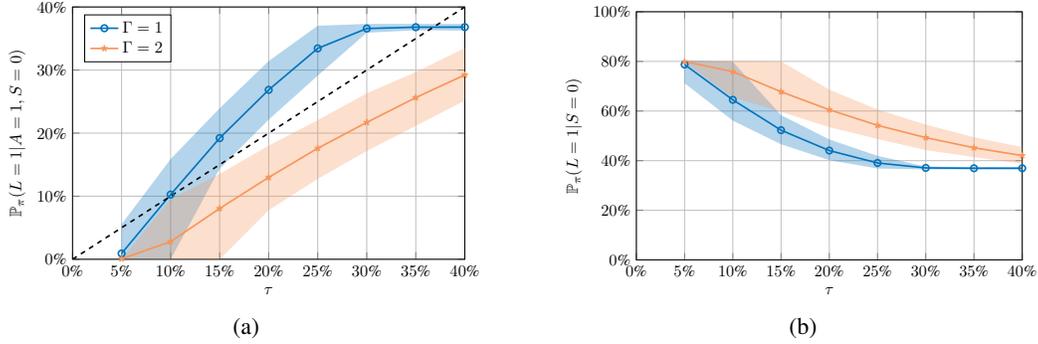

\subsection{STAR Data}
\label{sec:app_star}
For the experiments in \cref{sec:star} the 11 covariates in \cref{tab:star} are used.

\begin{table}[htbp]
\caption{The covariates used in the STAR experiments.}
\begin{center}
\begin{tabular}{l|l|c}
\toprule
Covariate & Description & Categorical \\ 
\midrule
gender & Student gender & yes \\
race & Ethnicity & yes \\
g1promote & Promoted from grade 1 & yes \\
g1specin & Special instruction & yes \\
g1surban & School location & yes \\
g1freelunch & Free/reduced lunch & yes \\
birthmonth & Birth month & no \\ 
g1present & School days present & no \\
g1absent & School days absent & no \\ 
g1tcareer & Teacher’s career level & no \\
g1tyears & Teaching experience (years) & no \\  
\bottomrule
\end{tabular}
\end{center}
\label{tab:star}
\end{table}

\subsection{International Stroke Trial Data}
\label{sec:app_stroke}
The International Stroke Trial (IST) was a large randomized trial that evaluated the effects of aspirin and heparin in acute ischemic stroke. The original trial included 19,435 patients and compared four arms: aspirin, heparin, both, or none \citep{international1997international}. For this analysis, we only compare aspirin ($\Dec= 1$) and no aspirin ($\Dec=0$) and heparin is viewed as a covariate. In total, our analysis included 23 covariates $\X$, such as age, sex, level of consciousness, and neurological symptoms. The public data set and descriptions of the covariates are available in \citet{sandercock2011international}. The outcome $\Loss$ of interest is death at six-month follow-up. We exclude 984 patients from a preliminary study and patients with missing or unknown outcome data (153 patients), resulting in a final sample of 9154 patients in the non-treatment group and 9144 patients in the aspirin group.

For policy learning, we use the same setup as in \cref{sec:star}. Assuming the weights are correctly assigned, the resulting policies show a valid coverage for all values of $\trt$ (\cref{fig:stroke-criteria}). In this scenario the difference between treat and non-treatment is small (\cref{fig:star-obj}) and a majority of the resulting policies switch from treat non to treat all when $\trt$ is between 20\% and 25\%. 

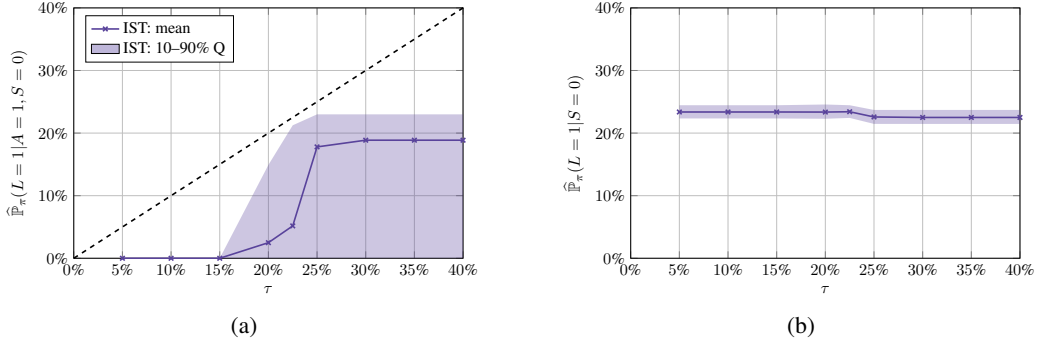
\begin{figure*}
\begin{subfigure}{0.47\textwidth}
    \centering
    \resizebox{\linewidth}{!}{ 
\begin{tikzpicture}
    \begin{axis}[
        xlabel={$\trt$},
        ylabel={$\widehat{\mathbb{P}}_{\pi}(L=1|A=1, S=0)$},
        yticklabel=\pgfmathparse{\tick*100}\pgfmathprintnumber{\pgfmathresult}\%,
        xticklabel=\pgfmathparse{\tick*100}\pgfmathprintnumber{\pgfmathresult}\%,
        xmin=0, xmax=0.4,
        ymin=0, ymax=0.4,
        legend pos=north west
    ]

    \addplot[color=Violet,  mark = x, line width=1pt] table[x=beta, y=mean constr, col sep=comma] {tikz/data/stroke.csv}; 
    \addlegendentry{IST: mean}
    \addlegendimage{area legend, fill=Violet!30}
    \addlegendentry{IST: 10–90\% Q}

    \addplot[name path=upper, draw=none] table[x=beta, y=constr_q90, col sep=comma] {tikz/data/stroke.csv};
    \addplot[name path=lower, draw=none] table[x=beta, y=constr_q10, col sep=comma] {tikz/data/stroke.csv};    
    \addplot[Violet, opacity=0.3] fill between[of=upper and lower];  

    \addplot [
        domain=0.0:0.5, 
        samples=10, 
        color=black,
        line width=1pt,
        dashed
        ]
        {x};

    \end{axis}
\end{tikzpicture}
    }
    \caption{}
    \label{fig:stroke-criteria}
\end{subfigure}
\hfill
\begin{subfigure}{0.47\textwidth}
    \centering
    \resizebox{\linewidth}{!}{ 
\begin{tikzpicture}
    \begin{axis}[
        xlabel={$\trt$},
        ylabel={$\widehat{\mathbb{P}}_{\pi}(L = 1|S=0)$},
        yticklabel=\pgfmathparse{\tick*100}\pgfmathprintnumber{\pgfmathresult}\%,
        xticklabel=\pgfmathparse{\tick*100}\pgfmathprintnumber{\pgfmathresult}\%,
        xmin=0, xmax=0.4,
        ymin=0, ymax=0.4,
        legend pos=north east
    ]

    \addplot[color=Violet,  mark = x, line width=1pt] table[x=beta, y=mean obj, col sep=comma] {tikz/data/stroke.csv}; 

    \addplot[name path=upper, draw=none] table[x=beta, y=obj_q90, col sep=comma] {tikz/data/stroke.csv};
    \addplot[name path=lower, draw=none] table[x=beta, y=obj_q10, col sep=comma] {tikz/data/stroke.csv};    
    \addplot[Violet, opacity=0.3] fill between[of=upper and lower];  

    \end{axis}
\end{tikzpicture}
    }
    \caption{}
    \label{fig:stroke-obj}
\end{subfigure}
\caption{The estimated probability of treatment risk, $\widehat{\Prob}_{\pi}(L = 1|\Dec=1, S=0)$ (a), and probability of nonrecovery, $\widehat{\Prob}_{\pi}(L = 1|S=0)$ (b) under policy $\policy$ for different values of the treatment risk tolerance $\trt$ for the IST dataset.}
\label{fig:stroke}
\end{figure*}

\section{Alternative Method for Average Guarantee}
\label{sec:app_alt_method}

We will now describe how to find a policy $\policy(\X)$ that controls the expected treatment risk of the learned policy,
\begin{equation}
\E[ \Tr(\policy) ] \leq \trt,
\label{eq:treatmentriskaverageguarantee}
\end{equation}
with a probabilty no less than $1-\miscoveragerate$.  Here we use a proof technique that uses similar steps as in  \cite{angelopoulos2022conformal}.

Randomly split the data $\settrial$ into three parts $\setpolicy$, $\setB$, and $\setrisk$ with $\npolicy$, $\nB$ and $\nrisk$ samples, respectively. For clarity and simplicity, we index the samples within each dataset starting from 1 and continuing up to their respective total number of samples. The first dataset, $\setpolicy$, is used to generate a set of rule-based policies, i.e., $\policy(\X;\trtparam), t \in (0,1)$, following the formulation in (\ref{eq:minimization_policy}) of the main paper.

The set $\setB$ is used to construct an upper bound $\overline{V}^{\alpha}(\trtparam)$. Let $\overline{V}^{\alpha}(\trtparam)$ be the $(1-\alpha)(1 + 1/\nB)$-th empirical quantile of $\{V_i(\trtparam) \;|\; 1 \leq i \leq \nB \}$, where $ V(\trtparam) = L \cdot \frac{\ind{\Dec =1}}{\probpolicy(\Dec=1|\Sampling=0)} \cdot \overline{\Weight}^{\Gamma}(\trtparam)$.

Finally, using $\setrisk$ we define $R_n(\trtparam) = \widehat{\E}_n [V(\trtparam)]$.

\begin{theorem}
\label{sec:average_theorem}
For any specified treatment risk tolerance $\trt$, define the empirical tolerance:
\begin{equation}
\trtparam_{\nrisk} = \argmin_{t \in (0,1)} \; \Ehat_{\nrisk} \left[L \cdot \overline{\Weight}^{\Gamma}(t)\right] \quad \text{subject to} \; \trt \geq \frac{1}{\nrisk+1} \left( \nrisk R_n(\trtparam_n) + \overline{V}^{\alpha}(\trtparam_n)\right).
    \label{eq:choose_trt_average} 
\end{equation}
The policy $\policy(\X; \trtparam_n )$ is certified to control the treatment risk in expectation according to (\ref{eq:treatmentriskaverageguarantee}) with a probability no less than $1-\miscoveragerate$. 
\end{theorem}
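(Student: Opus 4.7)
The plan is to follow the conformal risk control framework of \citet{angelopoulos2022conformal}, with the role of the deterministic loss bound there played by the $\setB$-based empirical quantile $\overline{V}^\miscoveragerate(\trtparam)$. I would introduce a hypothetical $(\nrisk{+}1)$-th sample drawn from $\prob(\cdot\mid\Sampling=s)$, independent of all training data, and use the identity $\Tr(\policy(\cdot;\trtparam_\nrisk)) = \E[V_{\nrisk+1}(\trtparam_\nrisk)\mid\setpolicy,\setB,\setrisk]$ to express the treatment risk of the learned policy as the conditional expectation of an out-of-sample $V$.

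\emph{Coverage step.} First I would show that the event $\event := \{V_{\nrisk+1}(\trtparam_\nrisk) \leq \overline{V}^\miscoveragerate(\trtparam_\nrisk)\}$ has probability at least $1-\miscoveragerate$. For each fixed $\trtparam$, the $\nB{+}1$ samples $V_1^B(\trtparam),\ldots,V_\nB^B(\trtparam),V_{\nrisk+1}(\trtparam)$ are i.i.d.\ conditional on $\setpolicy$, and the choice of the $(1-\miscoveragerate)(1{+}1/\nB)$-empirical quantile for $\overline{V}^\miscoveragerate(\trtparam)$ is the standard split-conformal calibration, which delivers $\Prob(V_{\nrisk+1}(\trtparam)\leq\overline{V}^\miscoveragerate(\trtparam))\geq 1-\miscoveragerate$. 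Because $\setB$ is independent of $\setrisk$ and of the test point, this coverage transfers to the $\setrisk$-adaptive $\trtparam_\nrisk$.

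\emph{Conformal risk control step.} By construction in (\ref{eq:choose_trt_average}), the deterministic bound $(\nrisk{+}1)\trt \geq \nrisk R_\nrisk(\trtparam_\nrisk) + \overline{V}^\miscoveragerate(\trtparam_\nrisk)$ always holds. On $\event$ I would substitute the no-larger $V_{\nrisk+1}(\trtparam_\nrisk)$ for $\overline{V}^\miscoveragerate(\trtparam_\nrisk)$ to obtain $\sum_{i=1}^{\nrisk+1}V_i(\trtparam_\nrisk) \leq (\nrisk{+}1)\trt$. Taking expectation and exploiting exchangeability of the $\nrisk{+}1$ samples via the leave-one-out argument of Theorem~1 in \citet{angelopoulos2022conformal} converts the empirical average into $\E[V_{\nrisk+1}(\trtparam_\nrisk)]$ and yields $\E[\Tr(\policy(\cdot;\trtparam_\nrisk))]\leq\trt$ on $\event$, which is the desired certification.

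\emph{Main obstacle.} The exchangeability manipulation is the most delicate part: although $V_1,\ldots,V_{\nrisk+1}$ are i.i.d., $\trtparam_\nrisk$ depends symmetrically on $V_1,\ldots,V_\nrisk$ but not on $V_{\nrisk+1}$, so the triple $\{V_i(\trtparam_\nrisk)\}_{i=1}^{\nrisk+1}$ is not naively exchangeable. The resolution from conformal risk control is to pivot through leave-one-out thresholds $\trtparam_\nrisk^{(j)}$ (computed without sample $j$) and invoke monotonicity of the constraint in $\trtparam$, so that the smallest of them dominates the diagonal sum $\sum_j V_j(\trtparam_\nrisk^{(j)})$; by exchangeability, $\E[V_j(\trtparam_\nrisk^{(j)})]$ is then the same for all $j$ and equals $\E[V_{\nrisk+1}(\trtparam_\nrisk)]$. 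Verifying the requisite monotonicity here—where $\trtparam$ enters only through the constrained optimizer in (\ref{eq:minimization_policy})—is the technically subtle step.
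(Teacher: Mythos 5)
Your first step (split-conformal coverage of $\overline{V}^{\miscoveragerate}(\trtparam)$ built from $\setB$, giving the event $E$ with probability at least $1-\miscoveragerate$) is the same as the paper's. But your second step diverges in a way that creates a genuine gap: you route the argument through the full conformal-risk-control machinery of \citet{angelopoulos2022conformal}, i.e.\ leave-one-out thresholds $\trtparam_{\nrisk}^{(j)}$ and a monotonicity property of the constraint in $\trtparam$. That monotonicity is exactly what is missing here: $\trtparam$ enters $V(\trtparam)$ only through the constrained empirical minimizer in (\ref{eq:minimization_policy}), and there is no reason for $t \mapsto V_i(t)$ (or the induced risk) to be monotone in $t$ -- the learned tree can change non-monotonically as the nominal tolerance varies. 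You flag this as ``the technically subtle step,'' but it is not merely subtle; as stated it is unverified and in general false, so the CRC leave-one-out pivot does not go through.

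The paper's proof avoids needing any such structure. It conditions on the multiset $M$ of the $\nrisk+1$ values: for any fixed $\trtparam$, exchangeability gives $V_{\nrisk+1}(\trtparam)\mid M \sim \mathrm{Uni}(\{V_1(\trtparam),\dots,V_{\nrisk+1}(\trtparam)\})$, so
\begin{equation*}
\E[V_{\nrisk+1}(\trtparam)\mid M] \;=\; \frac{1}{\nrisk+1}\bigl( \nrisk R_{\nrisk}(\trtparam) + V_{\nrisk+1}(\trtparam) \bigr),
\end{equation*}
and on the event $E$ the unobserved $V_{\nrisk+1}(\trtparam)$ is replaced by the $\setB$-quantile $\overline{V}^{\miscoveragerate}(\trtparam)$, which is computable from data. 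The empirical tolerance $\trtparam_{\nrisk}$ is then \emph{defined} in (\ref{eq:choose_trt_average}) so that this computable bound is at most $\trt$; the law of total expectation and $\Prob(E=1)\ge 1-\miscoveragerate$ finish the proof, together with $\Tr(\policy)\le\E[V]$ from Lemma~\ref{lem:partialidentifiability}. In other words, the three-way split (with $\setB$ independent of $\setrisk$) is introduced precisely so that no leave-one-out argument and no monotonicity in $\trtparam$ are required; your proposal reintroduces that requirement and leaves its key hypothesis unestablished. (A smaller remark: your claim that coverage ``transfers'' to the data-adaptive $\trtparam_{\nrisk}$ also glosses over the fact that $\trtparam_{\nrisk}$ depends on $\setB$ through $\overline{V}^{\miscoveragerate}$; the paper handles this informally as well, but it is not something your CRC detour resolves.)
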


\begin{proof}
For notational convenience, we omit the conditioning on $\Sampling = s$ in the expressions that follow. From \cref{lem:partialidentifiability} we have $\Tr(\policy) \leq \E[V]$. Let $M$ be the (multi)set of all possible permutations of elements from $\{V_{1}, \dots, , V_{n}, V_{n + 1}\}$, where $V_{n + 1}$ represents a future sample. We have
\begin{equation*}
    V_{n+1}(\trtparam) | M \sim \text{Uni}(\{ V_{1}, \dots, V_{n}, V_{n+1} \}),
\end{equation*}
for any fixed $\trtparam$. The expectation is therefore
\begin{align*}
    \E[V_{n+1}(\trtparam) | M ] &= \frac{1}{n+1} \sum^{n+1}_{i=1} V_{i}(\trtparam) \\
    &= \frac{1}{n+1} \left( n R_n(\trtparam) + V_{n+1}(\trtparam) \right).
\end{align*}
Define event indicator that the upper bound $\overline{V}^{\alpha}(\trtparam)$ is valid, $$E =\ind{V_{n+1}(\trtparam) \leq \overline{V}^{\alpha}(\trtparam)}.$$
Then 
\begin{equation*}
\Prob [E=1] \geq 1 - \alpha,
\label{eq_appx:weight_bound_guarantee}
\end{equation*}
see \citet{vovk2005algorithmic}; \citet[thm.~2.1]{lei2018distribution}.
Thus, if the event holds, it follows that 
\begin{align*}
   \E[V_{n+1}(\trtparam) | M, E=1 ] \leq \frac{1}{n+1} \left( n R_n(\trtparam) + \overline{V}^{\alpha}(\trtparam) \right).
\end{align*}
Choose any $\trtparam_n$ such that the right-hand side is less than $\trt$, i.e.,
\begin{align*}
    \E[V_{n+1}&(\trtparam_n)| M, E=1 ]
    \leq \frac{1}{n+1} \left( n R_n(\trtparam_n) + \overline{V}^{\alpha}(\trtparam_n) \right) 
    \leq \trt.
\end{align*}

Using law of total expectation, we have
\begin{align*}
   \E[V_{n+1}(\trtparam_n) | E=1 ] \leq \trt ,
\end{align*}
which is holds with a probability of at least $1-\alpha$.

\end{proof}

\subsection{Synthetic Data}
We generate $|\setdata| = \npolicy+\nB+\nrisk=1000+200+800$ samples from \cref{eq:syn_gen_x,eq:synthetic_propensity,eq:syn_gen_y}. Similarly to the main method, the set $\setpolicy$ is used to learn the policies $\policy(\X; \trtparam)$ in \cref{eq:minimization_policy} but for both $\Gamma = {1, 2}$. The set $\setB$ is used to estimate an upper bound $\bar{V}^{\alpha}(\trtparam)$, which holds with high probability ($\alpha = 0.01$). The last set $\setrisk$ is used to select $\trtparam_n$ in \cref{eq:choose_trt_average}. $\trt$ is varied over the range $[0.05, 0.4]$, spaced with 0.05 at each step. 

The expected treatment risk, $\E[\Tr(\policy)]$, and the expected population risk, $\E[\Popr(\policy)]$ are estimated by Monte Carlo simulations using 1000 runs, and the results are shown in \cref{fig:app-synthetic-gamma}. In \cref{fig:app-synthetic-gamma-criteria}, it is evident that $\trt$ effectively regulates the treatment risk on average. In \cref{fig:app-synthetic-gamma-obj} the trade-off between lowering population risk and controlling treatment risk is visible. 

As in the main paper, a confounding variable $\Unobserved$ is included in this case. The decision policy is learned using the same approach as previously described. In \cref{fig:app-synthetic-vio-criteria}, it is evident that for $\Gamma = 1$, the guarantee that the treatment risk should be below $\trt$ on average is not satisfied. In contrast, $\Gamma = 2$ provides a valid model for the propensity scores and the guarantee is valid. In \cref{fig:app-synthetic-vio-obj} the trade-off between minimizing the probability of nonrecovery and minimizing the treatment risk remains visible.

\begin{figure*}
\begin{subfigure}{0.47\textwidth}
    \centering
    \resizebox{\linewidth}{!}{ 
        \begin{tikzpicture}
    \begin{axis}[
        xlabel={$\trt$},
        ylabel={$\E[\Tr(\policy)]$},
        yticklabel=\pgfmathparse{\tick*100}\pgfmathprintnumber{\pgfmathresult}\%,
        xticklabel=\pgfmathparse{\tick*100}\pgfmathprintnumber{\pgfmathresult}\%,
        xmin=0, xmax=0.4,
        ymin=0, ymax=0.4,
        legend pos=north west
    ]

    \addplot[color=RoyalBlue, mark = o, line width=1pt] table[x=beta, y=mean constr, col sep=comma] {tikz/data/syn_mean_true1.csv}; 
    \addlegendentry{$\Gamma = 1$}
    
    \addplot[color=Peach, mark = star, line width=1pt] table[x=beta, y=mean constr, col sep=comma] {tikz/data/syn_mean_true2.csv};
    \addlegendentry{$\Gamma = 2$}

    \addplot [
        domain=0.0:0.4, 
        samples=10, 
        color=black,
        line width=1pt,
        dashed
        ]
        {x};
    \end{axis}
\end{tikzpicture}
    }
    \caption{}
    \label{fig:app-synthetic-gamma-criteria}
\end{subfigure}  
\hfill
\begin{subfigure}{0.47\textwidth}
    \centering
    \resizebox{\linewidth}{!}{ 
        \begin{tikzpicture}
    \begin{axis}[
        xlabel={$\trt$},
        ylabel={$\E[\Popr(\policy)]$},
        yticklabel=\pgfmathparse{\tick*100}\pgfmathprintnumber{\pgfmathresult}\%,
        xticklabel=\pgfmathparse{\tick*100}\pgfmathprintnumber{\pgfmathresult}\%,
        xmin=0, xmax=0.4,
        ymin=0, ymax=1.0,
        legend pos=north east
    ]

    \addplot[color=RoyalBlue, mark = o, line width=1pt] table[x=beta, y=mean obj, col sep=comma] {tikz/data/syn_mean_true1.csv}; 
    \addlegendentry{$\Gamma = 1$}

    \addplot[color=Peach, mark = star, line width=1pt] table[x=beta, y=mean obj, col sep=comma] {tikz/data/syn_mean_true2.csv}; 
    \addlegendentry{$\Gamma = 2$}


    \end{axis}
\end{tikzpicture}
    }
    \caption{}
    \label{fig:app-synthetic-gamma-obj}
\end{subfigure}
\caption{The expected treatment risk, $\E[\Tr(\policy)]$, and the expected population risk, $\E[\Popr(\policy)]$,  for different values of the treatment risk tolerance $\trt$ and different confounding assumptions ($\Gamma = {1 ,2}$) using known weights. (a) Expected treatment risk falls below any given $\trt$ for both $\Gamma = {1 ,2}$. (b) The trade-off between the expected population risk and different choices of $\trt$. $\Gamma = 2$ is more conservative.}
\label{fig:app-synthetic-gamma}
\end{figure*}
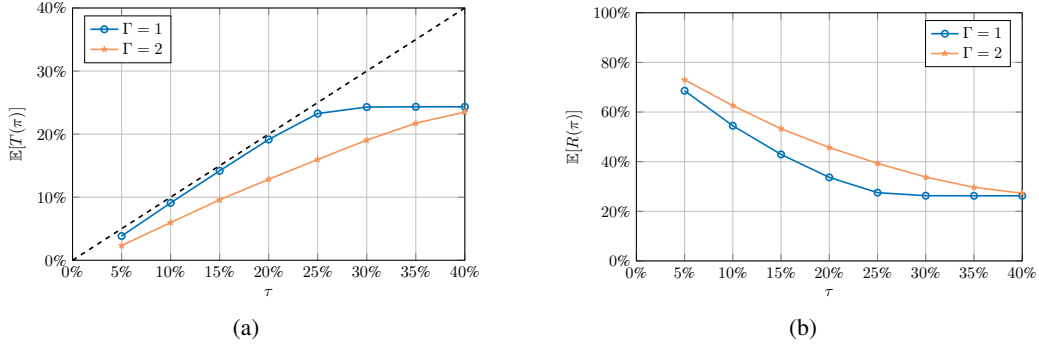

\begin{figure*}
\begin{subfigure}{0.47\textwidth}
    \centering
    \resizebox{\linewidth}{!}{ 
        \begin{tikzpicture}
    \begin{axis}[
        xlabel={$\trt$},
        ylabel={$\E[\Tr(\policy)]$},
        yticklabel=\pgfmathparse{\tick*100}\pgfmathprintnumber{\pgfmathresult}\%,
        xticklabel=\pgfmathparse{\tick*100}\pgfmathprintnumber{\pgfmathresult}\%,
        xmin=0, xmax=0.4,
        ymin=0, ymax=0.4,
        legend pos=north west
    ]

    \addplot[color=RoyalBlue, mark = o, line width=1pt] table[x=beta, y=mean constr, col sep=comma] {tikz/data/syn_mean_gamma1.csv};
    \addlegendentry{$\Gamma = 1$}
    
    \addplot[color=Peach, mark = star, line width=1pt] table[x=beta, y=mean constr, col sep=comma] {tikz/data/syn_mean_gamma2.csv};
    \addlegendentry{$\Gamma = 2$}

    \addplot [
        domain=0.0:0.4, 
        samples=10, 
        color=black,
        line width=1pt,
        dashed
        ]
        {x};
    \end{axis}
\end{tikzpicture}
    }
    \caption{}
    \label{fig:app-synthetic-vio-criteria}
\end{subfigure}  
\hfill
\begin{subfigure}{0.47\textwidth}
    \centering
    \resizebox{\linewidth}{!}{ 
        \begin{tikzpicture}
    \begin{axis}[
        xlabel={$\trt$},
        ylabel={$\E[\Popr(\policy)]$},
        yticklabel=\pgfmathparse{\tick*100}\pgfmathprintnumber{\pgfmathresult}\%,
        xticklabel=\pgfmathparse{\tick*100}\pgfmathprintnumber{\pgfmathresult}\%,
        xmin=0, xmax=0.4,
        ymin=0, ymax=1.0,
        legend pos=north east
    ]

    \addplot[color=RoyalBlue, mark = o, line width=1pt] table[x=beta, y=mean obj, col sep=comma] {tikz/data/syn_mean_gamma1.csv}; 
    \addlegendentry{$\Gamma = 1$}

    \addplot[color=Peach, mark = star, line width=1pt] table[x=beta, y=mean obj, col sep=comma] {tikz/data/syn_mean_gamma2.csv}; 
    \addlegendentry{$\Gamma = 2$}

    \end{axis}
\end{tikzpicture}
    }
    \caption{}
    \label{fig:app-synthetic-vio-obj}
\end{subfigure}
\caption{The expected treatment risk, $\E[\Tr(\policy)]$, and the expected population risk, $\E[\Popr(\policy)]$,  for different values of the treatment risk tolerance $\trt$ and different confounding assumptions ($\Gamma = {1 ,2}$). The weights are confounded. (a) For $\Gamma = 1$, the policy is invalid across a range of $\trt$ values, while $\Gamma = 2$ provides a valid model for the propensity scores, ensuring that the resulting policies remain valid across all $\trt$ values. (b) The trade-off between the expected population risk and different choices of $\trt$.}
\label{fig:app-synthetic-vio}
\end{figure*}
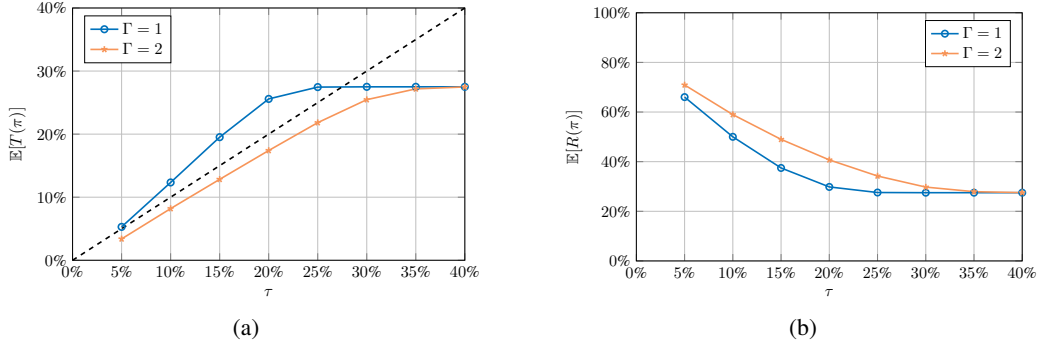

\subsection{STAR Data}
The preprocessing of the data is the same as in \cref{sec:star}. Of the final dataset comprising 4,218 students, 60 percent is allocated for policy construction in \cref{eq:minimization_policy}. The remaining samples are split equally: 20 percent are used to estimate an upper bound $\overline{V}^{\alpha}(\trtparam)$ that is maintained with high probability ($\alpha = 0.01$) and for evaluation, while the final 20 percent are dedicated to selecting $\trtparam_n$ in \cref{eq:choose_trt_average}. The parameter $\trt$ ranges from $0.05$ to $0.45$, increasing in steps of $0.05$. 

The resulting policies remain valid for all values of $\trt$ (\cref{fig:app-star-criteria}), assuming the weights are assigned correctly. The trade-off between minimizing the total expected loss and the expected loss for the treated remains evident (\cref{fig:app-star-obj}).

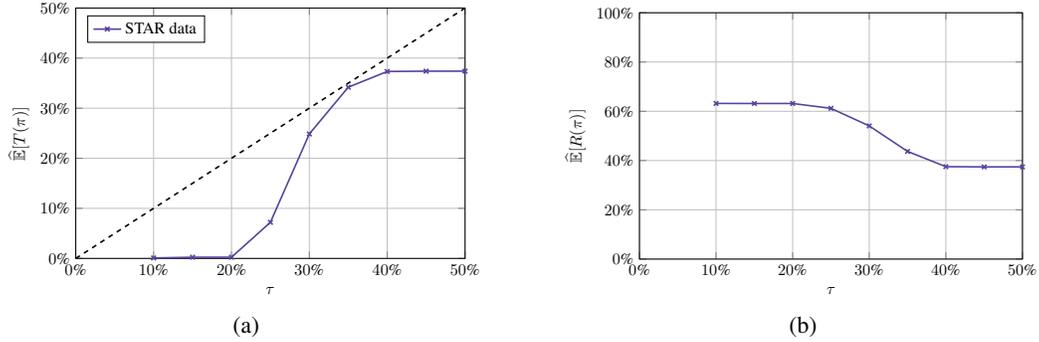
\begin{figure*}
\begin{subfigure}{0.47\textwidth}
    \centering
    \resizebox{\linewidth}{!}{ 
\begin{tikzpicture}
    \begin{axis}[
        xlabel={$\trt$},
        ylabel={$\widehat{\E}[\Tr(\policy)]$},
        yticklabel=\pgfmathparse{\tick*100}\pgfmathprintnumber{\pgfmathresult}\%,
        xticklabel=\pgfmathparse{\tick*100}\pgfmathprintnumber{\pgfmathresult}\%,
        xmin=0, xmax=0.5,
        ymin=0, ymax=0.5,
        legend pos=north west
    ]

    \addplot[color=Violet,  mark = x, line width=1pt] table[x=beta, y=mean constr, col sep=comma] {tikz/data/star_mean.csv}; 
    \addlegendentry{STAR data}
    
    \addplot [
        domain=0.0:0.5, 
        samples=10, 
        color=black,
        line width=1pt,
        dashed
        ]
        {x};

    \end{axis}
\end{tikzpicture}
    }
    \caption{}
    \label{fig:app-star-criteria}
\end{subfigure}  
\hfill
\begin{subfigure}{0.47\textwidth}
    \centering
    \resizebox{\linewidth}{!}{ 
\begin{tikzpicture}
    \begin{axis}[
        xlabel={$\trt$},
        ylabel={$\widehat{\E}[\Popr(\policy)]$}, 
        yticklabel=\pgfmathparse{\tick*100}\pgfmathprintnumber{\pgfmathresult}\%,
        xticklabel=\pgfmathparse{\tick*100}\pgfmathprintnumber{\pgfmathresult}\%,
        xmin=0, xmax=0.5,
        ymin=0, ymax=1,
        legend pos=north east
    ]

    \addplot[color=Violet,  mark = x, line width=1pt] table[x=beta, y=mean obj, col sep=comma] {tikz/data/star_mean.csv}; 

    \end{axis}
\end{tikzpicture}
    }
    \caption{}
    \label{fig:app-star-obj}
\end{subfigure}
\caption{The estimated probability of the loss for treated individuals, $\widehat{\E}[\Tr(\policy)]$ (a) and the estimated probability of the loss, $\widehat{\E}[\Popr(\policy)]$ (b), for different values of the design parameter $\trt$ for the STAR dataset.}
\label{fig:app-star}
\end{figure*}

\end{document}